\documentclass[journal]{IEEEtran}
\ifCLASSOPTIONcompsoc
  \usepackage[nocompress]{cite}
\else
  \usepackage{cite}
\fi
\ifCLASSINFOpdf
\else
\fi

\usepackage[cmtip,all]{xy}
\usepackage[english]{babel}
\usepackage{amsmath,amssymb}
\usepackage{times}
\usepackage{relsize}
\usepackage{graphicx}
\usepackage{url}
\usepackage{nicefrac}
\usepackage{booktabs}
\usepackage{multirow}
\usepackage{mparhack}
\usepackage{caption}
\usepackage{amsthm}
\usepackage{bm}
\usepackage{algorithm}
\usepackage[noend]{algorithmic}
\usepackage{tabularx}

\usepackage{graphicx}
\usepackage[caption=false,font=footnotesize]{subfig}

\usepackage{array}
\usepackage{eqparbox}
\usepackage{mdwmath}

\usepackage{epsfig}
\usepackage{epstopdf}
\usepackage{xcolor}
\usepackage{subfig}

\usepackage{color}
\usepackage{soul}



\newtheorem{theorem}{Theorem}
\newtheorem{lemma}[theorem]{Lemma}

\newtheorem{corollary}[theorem]{Corollary}

\newcounter{exam}
\newcounter{rema}
\newtheorem{remark}[rema]{Remark}

\DeclareFontFamily{OT1}{pzc}{}
\DeclareFontShape{OT1}{pzc}{m}{it}%
              {<-> s * [1.1] pzcmi7t}{}
\DeclareMathAlphabet{\mathpzc}{OT1}{pzc}%
                                 {m}{it}

\def\CE{{\mathcal E}}

\def\CI{{\mathcal I}}

\def\CC{{\mathcal C}}

\def\CH{{\mathcal H}}

\def\CA{{\mathcal A}}
\def\CK{{\mathcal K}}

\def\CT{{\mathcal T}}

\def\CA{{\mathcal A}}
\def\CG{{\mathcal G}}

\newcommand{\CV}{\mathpzc{V}}
\usepackage{accents}

\makeatletter
\newcommand{\raisemath}[1]{\mathpalette{\raisem@th{#1}}}
\newcommand{\raisem@th}[3]{\raisebox{#1}{$#2#3$}}
\makeatother

\begin{document}


\title{Optimal Routing for Federated Learning over Dynamic Satellite Networks: Tractable or Not?}
\author{Yi Zhao,
        Di Yuan,~\IEEEmembership{Senior Member, IEEE},
        Tao Deng,
        Suzhi Cao,
        and Ying Dong,~\IEEEmembership{Student Member, IEEE}
        
\thanks{Yi Zhao and Di Yuan are with the Department of Information Technology, Uppsala University, 75105 Uppsala, Sweden (e-mail: yi.zhao@it.uu.se; di.yuan@it.uu.se). Tao Deng is with the School of Computer Science and Technology, Soochow University, Suzhou 215006, China (e-mail: dengtao@suda.edu.cn). Suzhi Cao is with the Technology and Engineering Center for Space Utilization, Chinese Academy of Sciences, Beijing, China (e-mail: caosuzhi@csu.ac.cn). Ying Dong is with the School of Communications and Information Engineering, Chongqing University of Posts and Telecommunications, Chongqing, China (e-mail: yingd.cqupt@qq.com).}
\thanks{This work was supported by the Swedish Research Council with grant number 2022-04123.}
}

\maketitle

\pagestyle{empty}
\thispagestyle{empty}

\begin{abstract}
Federated learning (FL) is a key paradigm for distributed model learning across decentralized data sources. Communication in each FL round typically consists of two phases: (i) distributing the global model from a server to clients, and (ii) collecting updated local models from clients to the server for aggregation. This paper focuses on a type of FL where communication between a client and the server is relay-based over dynamic networks, making routing optimization essential. A typical scenario is in-orbit FL, where satellites act as clients and communicate with a server (which can be a satellite, ground station, or aerial platform) via multi-hop inter-satellite links. This paper presents a comprehensive tractability analysis of routing optimization for in-orbit FL under different settings. For global model distribution, these include the number of models, the objective function, and routing schemes (unicast versus multicast, and splittable versus unsplittable flow). For local model collection, the settings consider the number of models, client selection, and flow splittability. For each case, we rigorously prove whether the global optimum is obtainable in polynomial time or the problem is NP-hard. Together, our analysis draws clear boundaries between tractable and intractable regimes for a broad spectrum of routing problems for in-orbit FL. For tractable cases, the derived efficient algorithms are directly applicable in practice. For intractable cases, we provide fundamental insights into their inherent complexity. These contributions fill a critical yet unexplored research gap, laying a foundation for principled routing design, evaluation, and deployment in satellite-based FL or similar distributed learning systems.

\end{abstract}

\begin{IEEEkeywords}
Federated learning, routing, satellite networks, tractability, computational complexity.
\end{IEEEkeywords}

\section{Introduction}
\label{sec:introduction}

Federated learning (FL) has emerged as a powerful paradigm for
distributed model training across multiple devices or data sources
without sharing raw data.  In FL, typically, clients perform local updates using
their respective data and send model updates to a central
server for parameter aggregation~\cite{pmlr-v54-mcmahan17a,ZHANG2021106775,9261995}. In a FL round, communications consist in two phases. In the distribution phase,
the global model is sent from the server node to clients to perform
local training.  Next, in the aggregation phase, the clients upload
their updated local models to the server for aggregation. The processes of distributing the global model to the clients and sending updates to the server are also referred to as \emph{downloading} and \emph{uploading}, respectively. If the communication resource is constrained, FL may involve client selection (CS) \cite{CS-1}, i.e., choosing a subset of clients to collect their updates, in the aggregation phase.\par
Recent studies have shown growing interest in integrating FL into satellite communication and computing systems (i.e., in-orbit FL), especially low-Earth-orbit (LEO) constellations \cite{LEO-1}. In this paradigm, LEO satellites act as clients, while a satellite, a ground station, or an aerial platform can typically be the server. In-orbit FL is driven by the need to alleviate satellite-ground bandwidth bottleneck, to support services with up-to-date models, and to protect data privacy. In-orbit FL empowers various applications such as mobile communication, Earth observation, disaster monitoring, and target tracking \cite{Track}.\par

\renewcommand{\arraystretch}{1.2}
\begin{table}[!t]
\centering
\caption{Comparison of terrestrial FL and in-orbit FL in their characteristics and resulting routing requirements.}
\label{tab:comp}
\begin{tabular}{l|cc}
\toprule
\textbf{} & \textbf{Terrestrial FL \cite{mobility}} & \textbf{In-orbit FL} \\ \midrule
\textbf{\begin{tabular}[c]{@{}l@{}}Network topology\end{tabular}} & Star & Mesh \\ \hline 
\textbf{\begin{tabular}[c]{@{}l@{}} Link connectivity \end{tabular}} & Continuous & Intermittent \\ \hline 
\textbf{\begin{tabular}[c]{@{}l@{}}Server-client connection\end{tabular}} & Direct & \begin{tabular}[c]{@{}c@{}}Relay-based \\ (via other clients)\end{tabular} \\ \hline 
\textbf{Mobility} & \multicolumn{1}{c}{\begin{tabular}[c]{@{}c@{}}   Client location\\     Channel quality    \\     \quad \end{tabular}} & \multicolumn{1}{c}{\begin{tabular}[c]{@{}c@{}}   Client location\\     Channel quality    \\     Network topology\end{tabular}} \\ \hline 
\textbf{\begin{tabular}[c]{@{}l@{}}Multi-hop routing\end{tabular}} & No & Yes  \\ \bottomrule
\end{tabular}
\end{table}

However, in-orbit FL introduces new challenges to routing among the server and clients, due to characteristics that fundamentally differ from terrestrial FL, as summarized in Table \ref{tab:comp}. In short, the core difference lies in relay-based communication over time-varying network topology. To cope with this challenge, some studies \cite{liu2025fedhchierarchicalclusteredfederated, 10605604,10121575} have utilized clustered FL with hierarchical servers. The satellites form multiple clusters, and each cluster has a local server (which is, one of the satellites) for intra-cluster aggregation. The connections between the local server and the clients in a cluster can be direct based on distance \cite{liu2025fedhchierarchicalclusteredfederated}, or relay-based but over stable links within one orbit \cite{10605604, 10121575}. The communication among the local server of each cluster and the super server is either simply assumed to be direct \cite{10121575} or handled by the underlying black-box network-layer protocols \cite{10605604}, or via topology-aware multi-hop routing algorithms based on orbital predictability and network flow theory \cite{liu2025fedhchierarchicalclusteredfederated}. Papers \cite{10716798, 10920666} have concerned aggregation scheduling and communication efficiency for in-orbit FL, but do not consider routing. Overall, the routing design constitutes a critical gap in this research landscape, as routing is not an optional add-on but a fundamental problem under relay-based communications. Besides, the performance of routing solution directly impacts the feasibility or determination of contributors (i.e., CS) to global aggregation and hence influences the convergence efficiency of FL \cite{10716798}. Therefore, tailoring dedicated routing schemes for in-orbit FL, instead of resorting to the off-the-shelf application-agnostic network-layer protocols, is necessary, in particular given the constrained communication resource in satellite networks. Although the most recent work \cite{liu2025fedhchierarchicalclusteredfederated} represents a significant step in addressing this research gap, it exhibits the limitations of myopic snapshot-by-snapshot greedy-based routing optimization and the communication restricted to unicast and splittable flow (which means an ideal assumption that a data packet can be arbitrarily partitioned). Practical factors such as CS, cross-layer integration with routing protocols, and concurrent FL of multiple models have not been involved in \cite{liu2025fedhchierarchicalclusteredfederated}. 

Segment routing (SR) \cite{SR-1} and time-varying graph (TVG) bring new possibilities for routing in in-orbit FL. SR enables flexible path control at a source node, without maintaining per-flow state in intermediate nodes, supporting also multicast and multipath transmission. SR is especially efficient for traffic engineering in highly dynamic yet predictable satellite networks \cite{SSR-1, SSR-2}. To model the dynamics of satellite constellations and the cache-and-forward capability of satellites, a TVG consists of a sequence of connected static topologies (a static topology corresponds to a so-called snapshot) along time. Such models are well-studied in the networking and
distributed systems literature, e.g.,~\cite{KEMPE2002820,casteigts2012}.  
For satellite routing, the periodicity and predictability of orbits
make time-varying graphs particularly well suited.

Based on SR and TVG, given the need to route FL model parameters over time-varying
network topologies with a deadline, the potential presence of multiple FL models and
multiple aggregation nodes, the possibility of employing multicast
rather than unicast for global model distribution, and possibly the
need of making CS for model collection, we ask a central
and fundamental question: \par
\emph{\textbf{Question:} Is optimal routing of FL model(s) over TVGs for satellite networks computationally tractable under these diverse
scenarios? }\par
More concretely, can we design a polynomial-time algorithm
that guarantees the global optimum, or for certain routing settings,
the problem is inherently NP-hard and we probably need to resort to
sub-optimal routing solutions? Analyzing the underlying computational complexity is crucial: Polynomial-time algorithms are valuable for practical use in-orbit FL, while rigorous understanding of the problems that fall into the class of NP-hardness delineates the boundaries of what is theoretically achievable, guiding the design of algorithms and system architectures\footnote{Although, in some realistic systems, especially those of small scale, or where a routing scheme exhibits desirable properties such as capacity-preserving, heuristic solutions may still perform satisfactorily for a NP-hard problem.}. 

To our knowledge, this is the first work that systematically
investigates the tractability of optimal routing for FL over
dynamic satellite networks.  In the forthcoming sections, we examine a
broad spectrum of problem settings and provide a comprehensive
tractability analysis, with the following specific contributions.

\begin{itemize}

\item We formally model optimal routing for global model distribution across
time-varying satellite networks under variants defined by:

\begin{itemize}
\item whether routing is unicast or multicast,
\item whether there is a single FL model or multiple heterogenous models,
\item whether the data flow is splittable (i.e., multiple
paths can be used to reach a destination, each carrying a 
fractional amount of flow) or not,
\item and different objective functions such as average (sum) completion time or min-max 
completion time, measured over snapshot indices.
\end{itemize}
For each of the problem settings above, we rigorously prove either that the global
optimum can be achieved in polynomial time or that the problem is
NP-hard, thus delineating precise tractability boundaries.

\item For model aggregation (collection of local updates), we show that the
 boundary between tractability and intractability hinges critically on:
\begin{itemize}
\item whether there is a single FL model or multiple heterogenous models,
\item whether the data flow is splittable or not,
\item whether there is CS or not.
\end{itemize} 

\end{itemize}

By formally mapping a complexity landscape for FL routing over satellite networks, we lay a solid foundation for principled design, evaluation, and
deployment of routing strategies in satellite-based and other dynamic distributed learning systems, filling a critical yet unexplored research gap.

\begin{remark}
In our analysis, we consider optimal routing of one and two FL models.
Note that the result of intractability of a problem with two models
immediately generalize to scenarios with more than two models.  If a
problem is tractable for two models, we will show that tractability
holds also for multiple models.  
$\Box$
\end{remark}
\begin{remark}
We assume satellite(s) as server(s) in the analysis in the rest of the paper. Nevertheless, the results apply also to the case of ground station or aerial platform.$\Box$	
\end{remark}

\section{Preliminaries}
\label{sec:preliminary}

\subsection{In-Orbit FL with Deadline, CS, and Multiple Models}
\begin{figure*}[t]
\centering
\subfloat[FL of one model, without CS.]{
    \includegraphics[width=0.48\textwidth]{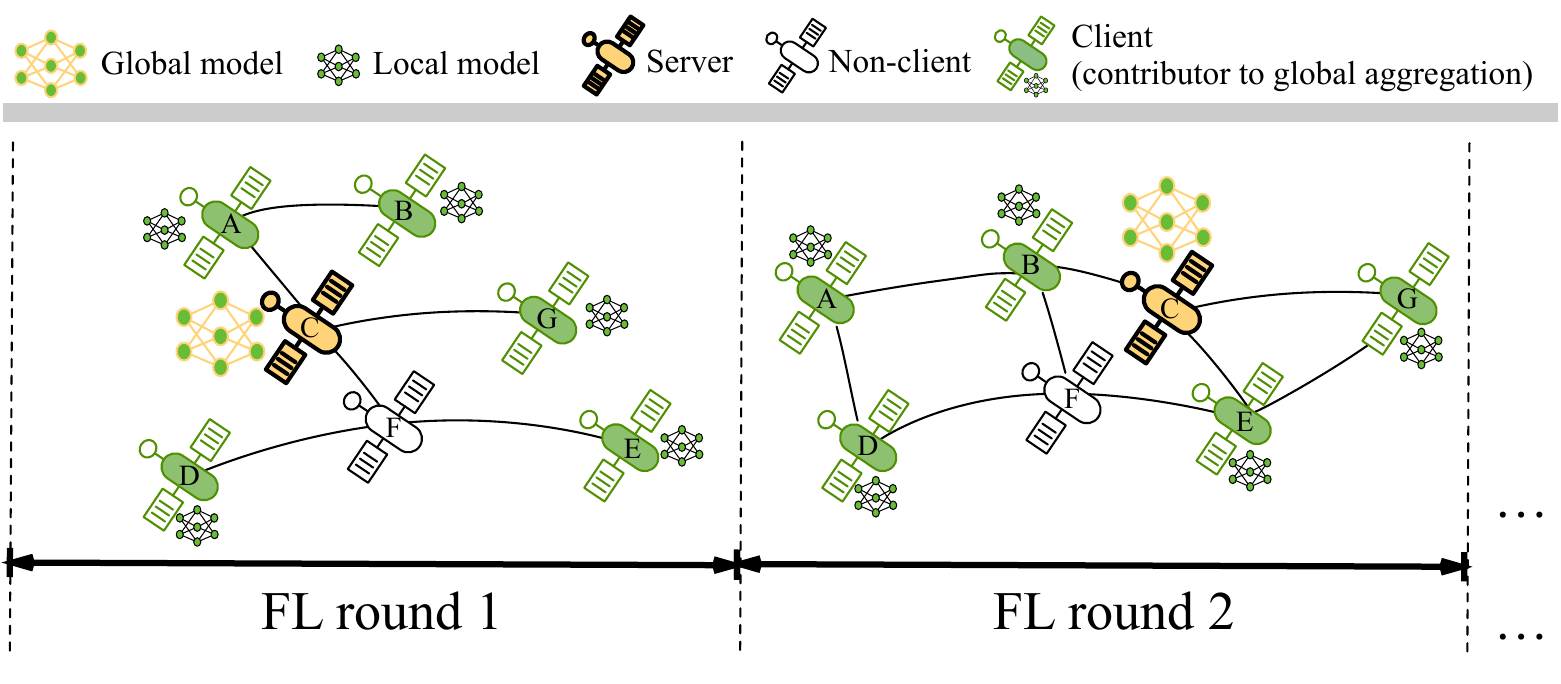}
    \label{fig:sub1}}\hfill
\subfloat[FL of one model, with CS.]{
    \includegraphics[width=0.48\textwidth]{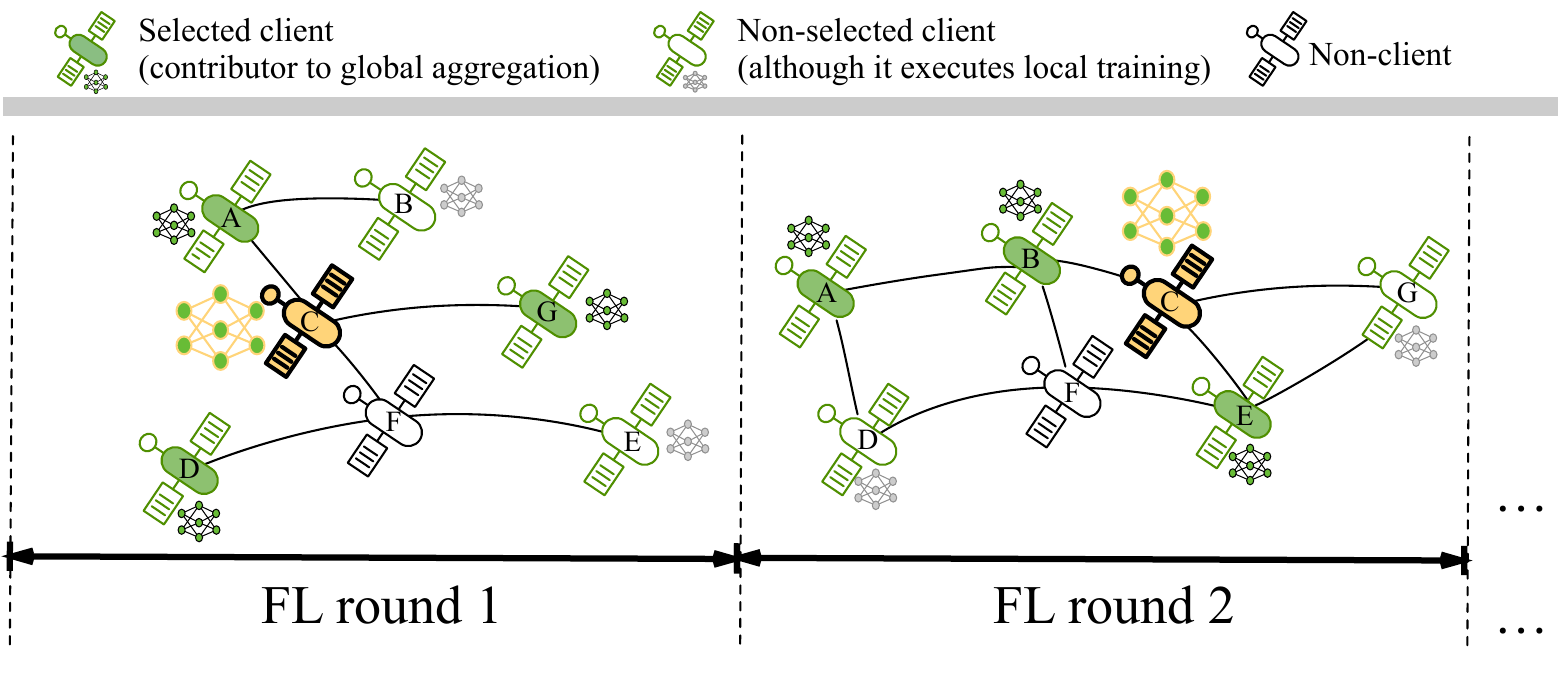}
    \label{fig:sub3}}\hfill
\vspace{0.3cm}
\subfloat[FL of two models, without CS.]{
    \includegraphics[width=0.48\textwidth]{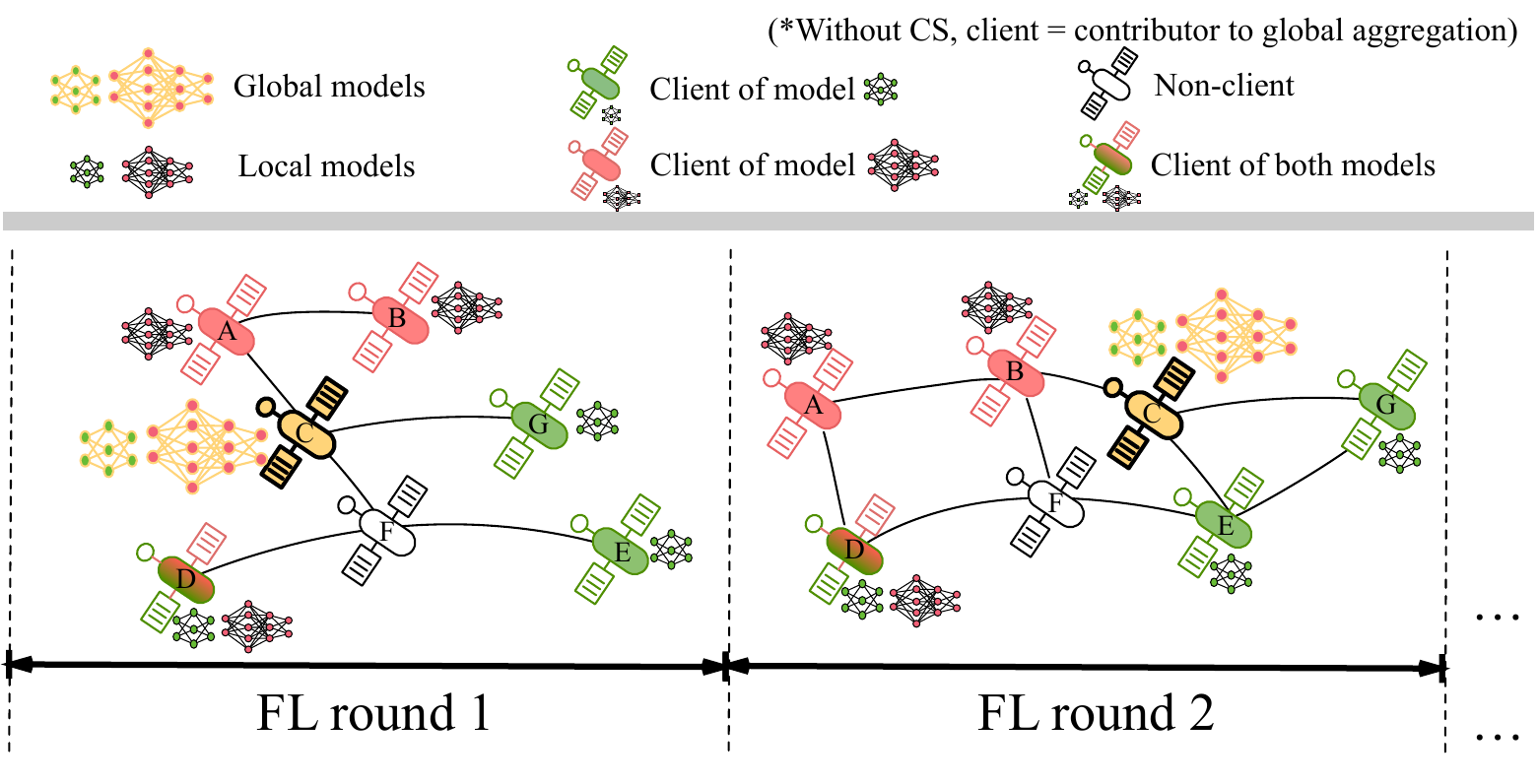}
    \label{fig:sub2}}
\subfloat[FL of two models, with CS.]{
    \includegraphics[width=0.48\textwidth]{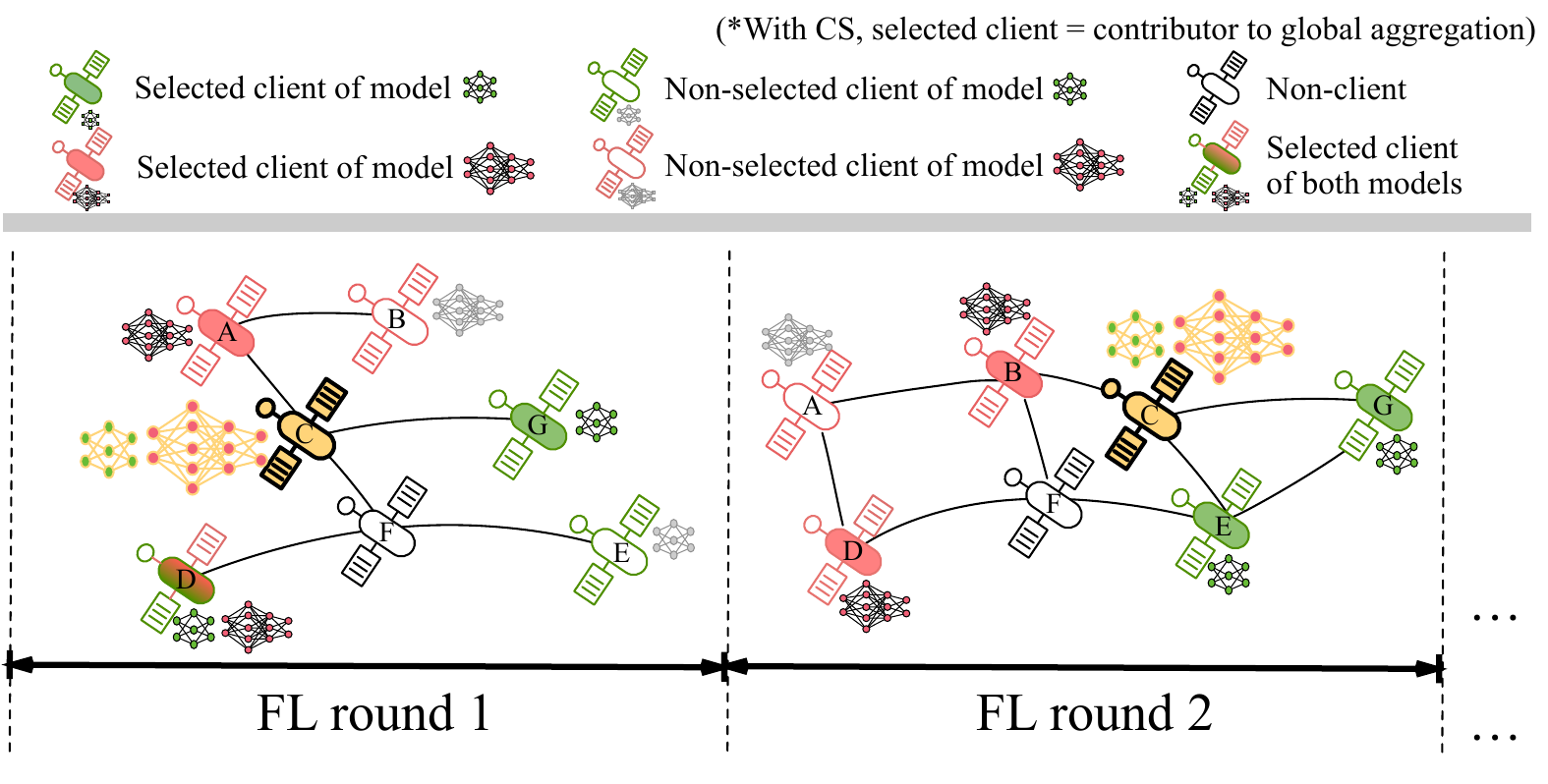}
    \label{fig:sub4}}
\caption{In-orbit FL, under different options of: 1) one or two model(s) and 2) with or without CS.}
\label{fig:CS and MFL}
\end{figure*}

We impose a deadline for each round of in-orbit FL, although the durations of different rounds need not be uniform. This is motivated by two reasons: first, the absence of a deadline may degrade FL convergence, in particular in communication-constrained satellite networks; second, this setting facilitates modeling the dynamics of satellite network topology (see Section \ref{sec:TVG} for details).\par
Sometimes, due to limited communication resources, it is not feasible to collect the local models from all clients before the deadline of an FL round. In this case, CS \cite{CS-1} becomes necessary, which allows a subset of selected clients to contribute to the global model aggregation. Since not all clients contribute equally to the global aggregation in accelerating convergence, CS is typically biased based on some utility metrics\cite{CS-1,CS-2}. The higher the utility value, the more important the local updates of a client are for the FL convergence. CS with static metrics (such as dataset size and hardware parameters) is typically conducted by the server before the model distribution phase, while CS relying on real-time metrics (such as channel information and local training loss) is performed in the aggregation phase \cite{CV-1,CNV-1,CS-1,CS-2}. In this paper, we focus on CS with real-time client utilities. For FL without CS, clients refer to the satellites involved in local training as well as global model aggregation. For FL with CS, the term client means a satellite executing local training, while selected clients are the satellites that eventually contribute to global model aggregation.\par
We go beyond a single FL model in our tractability analysis. Multiple FL models can be deployed for different in-orbit tasks (with essentially different input and output), or there could be heterogeneous models for the same task. The motivation of deploying the latter is usually for adaptive utilization of orbit-position-varying energy supply, efficient two-stage inference, and fault tolerance. For both cases, multiple FL models may have their respective servers or share one. We assume that the training of multiple FL models, no matter with one or more servers, share the same time horizon setting (the start time and deadline) of FL rounds. This setting facilitates the management of interactions between training processes of models. A typical example is knowledge distillation \cite{KD-1,KD-2} among heterogeneous models for the same task. The transmission of the models share the communication capacity of inter-satellite-links of the dynamic networks. \par 
An illustration showing in-orbit FL with options of one or two models, with or without CS, is given in Figure \ref{fig:CS and MFL}.

\subsection{Segment Routing, Multicast, and Multipath Transmission}
In a routing problem, the terms \emph{source} and \emph{sink} refer to the origin and destination of a data flow, respectively. And the term \emph{demand} refers to the amount of flow that a source/sink node requires to supply/receive. For the downloading phase of in-orbit FL, the source is the server while the sinks are the clients. Their roles reverse for the uploading phase. Additionally, if CS is applied, the sources in the uploading phase are instead the selected clients. \par
SR \cite{SR-1, SR-2} is a source routing paradigm. Unlike traditional routing where each intermediate node independently decides the next hop, SR enables flexible path control by allowing the source node to pre-push an ordered list of forwarding instructions, called a segment stack, into the packet header. In SR, an intermediate node simply forwards the packet to a specified node (with node segment) or over a specific interface/link (with adjacency segment) according to the segment instructions. SR enables simplified network operations, improved scalability, and precise traffic engineering. SR is typically implemented as an extension to the existing interior and border gateway protocols \cite{SR-1}. \par 
Except for unicast (from a single source to a single sink), SR natively supports multicast (from a single source to  multiple sinks), through segments including replication instructions to represent a multicast tree \cite{SSR-2}. At a branching node, the replication instruction indicates duplicating the packet and forwarding its copies along multiple downstream paths. For multipath transmission, the source can split the data into multiple packets and assign each packet a different segment stack (i.e., splittable flow), thereby utilizing path diversity for load balancing. With source node A, the examples of simplified representations of segment stacks for unicast and multicast are given below, illustrated in Figure \ref{fig:sr}:
\begin{itemize}
\item Unicast along path A-B-C-D: [B, C, D] 
\item Unicast with multipath transmission, via paths A-B-C-D and A-E-F-D: [B, C, D] for packet 1 and [E, F, D] for packet 2.
\item Multicast from A to C, E, and F, branching at B and D: [A, B, Replication, [C], [D, Replication, [E], [F]]]
\end{itemize}
\par
\begin{figure}
\centering
  \includegraphics[width=0.37\textwidth]{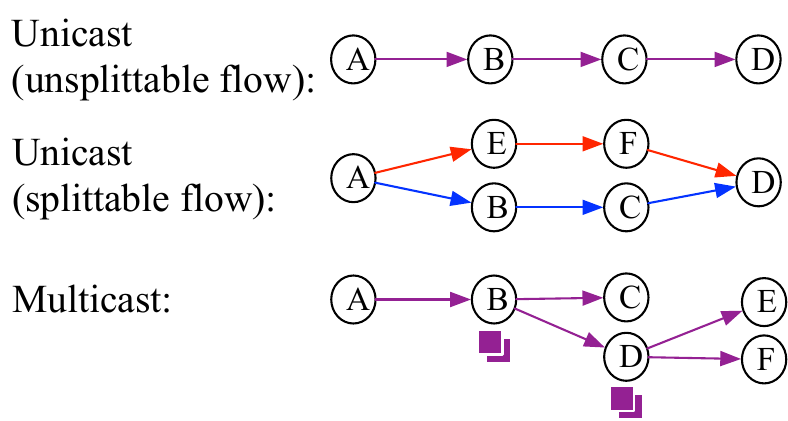}
  \caption{Examples of multipath transmission and multicast in SR, where overlapped purple blocks indicate replication operations.}
  \label{fig:sr}
\end{figure}
The application of SR in satellite networks is motivated by the need to cope with highly dynamic yet predictable topologies, to simplify onboard processing through stateless forwarding, and to achieve load balancing under constrained communication resources. Recent studies have demonstrated the effectiveness of SR in LEO networks for traffic engineering and fast reroute \cite{SSR-1, SSR-2}.\par
In our scenario, for the downloading phase of FL, the server determines the routing paths based on predictable global network topologies, and pre-pushes the segments in the header of data packets of the global model. In the uploading phase, the server determines CS (if any) as well as routing, and informs the clients. The (selected) clients will then pre-push the corresponding segments in their data packets of local updates.

\subsection{Time-Varying Graphs}
\label{sec:TVG}
Let $\CI = \{1, \dots, I\}$ and $\CK = \{1, \dots, K\}$ denote the
sets of satellites and snapshots, for a generic FL round under
consideration. If there is one FL model, the model size is denoted by
$q$, the server satellite is denoted by $s$, and the set of clients
for local training is denoted by $\CC$. For the scenario of two FL
models, $q_1$ and $q_2$ denote their respective sizes, $\CC_1$ and
$\CC_2$ denote the respective clients for local training, and, if the
two models have distinct server satellites, notations $s_1$ and $s_2$
are used.  We allow $\CC_1$ and $\CC_2$ to overlap in our tractability
analysis.\par

\begin{figure}
\centering
  \includegraphics[width=0.42\textwidth]{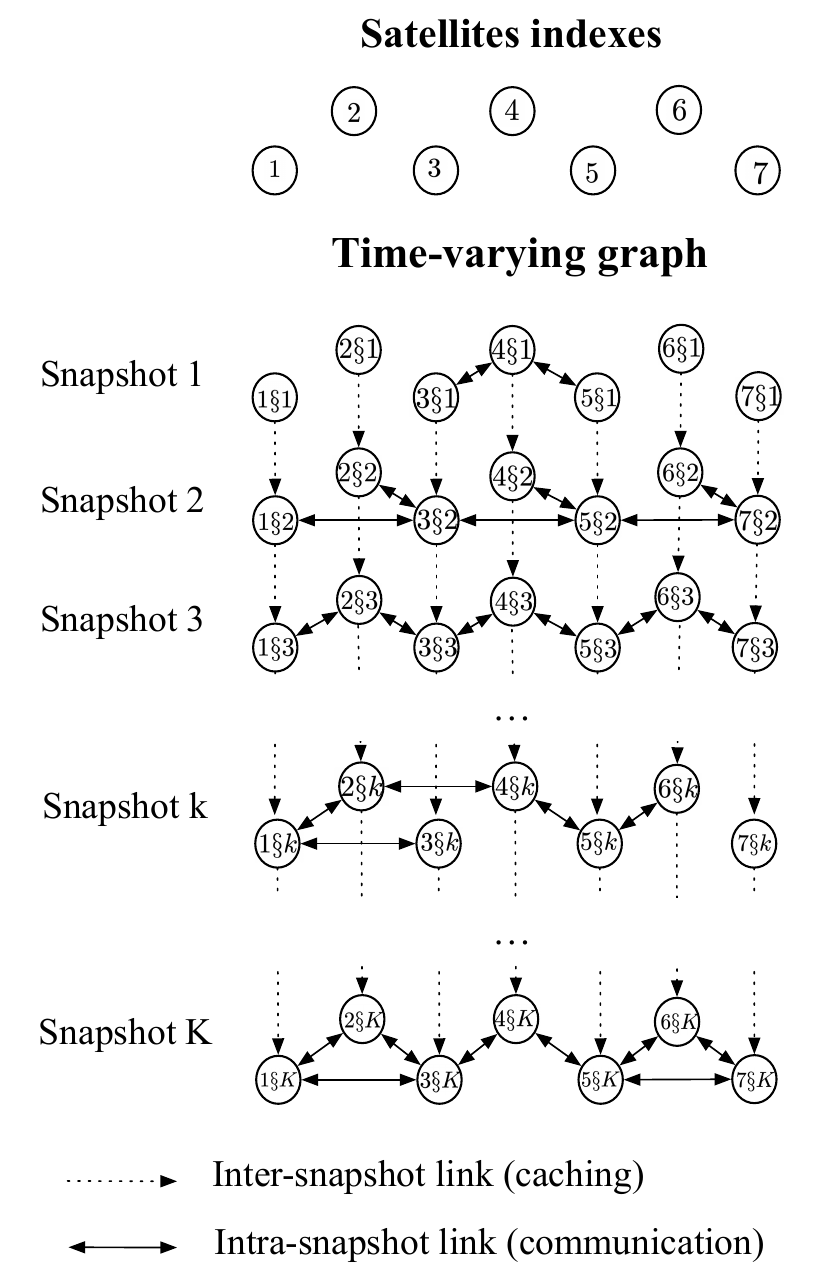}
  \caption{An example of TVG, where $i \S k$ represents the copy of satellite $i$ in snapshot $k$.}
  \label{fig:tvg}
\end{figure}

TVG is a concept for representing time-dynamic
graph structure, frequently used for modeling satellite
networks~\cite{TVG-1,TVG-3,s22124552,9931973}. Typically, a TVG is
composed by a sequence of connected snapshots, and each snapshot is a static
network topology, illustrated in
Figure~\ref{fig:tvg}. We use $\CG = (\CV, \CA)$ to denote the TVG with node set $\CV$ and arc set $\CA$. As a satellite appears as nodes in multiple snapshots in $\CG$, we need to notationally distinguish between these nodes, and at the same time indicate they all refer to the same satellite. To this end, $i \S k$ is used to refer to the node of satellite $i$ in snapshot $k$. \par
\begin{figure*}[t]
\centering
\subfloat[Unicast with unsplittable flow, for FL of one model.]{
    \includegraphics[width=0.21\textwidth]{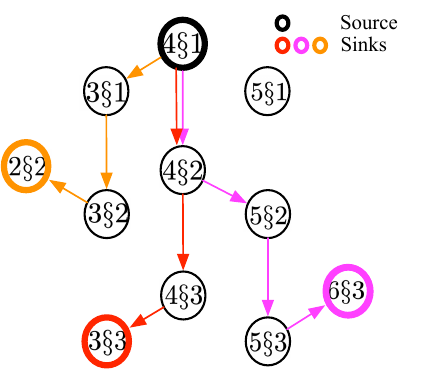}
    \label{fig:D-sub1}}\hfill
\subfloat[Unicast with splittable flow, for FL of one model.]{
    \includegraphics[width=0.21\textwidth]{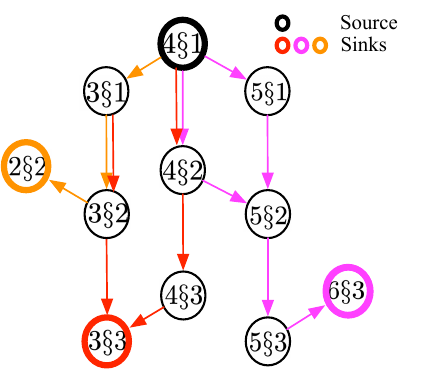}
    \label{fig:D-sub2}}\hfill
\subfloat[Multicast (only unsplittable flow) for FL of one model.]{
    \includegraphics[width=0.21\textwidth]{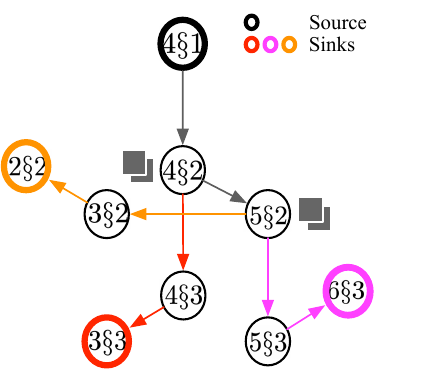}
    \label{fig:D-sub3}}\hfill
\subfloat[Multicast (only unsplittable flow) for FL of two heterogenous models.]{
    \includegraphics[width=0.24\textwidth]{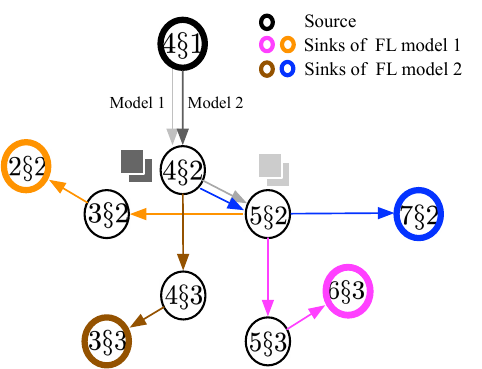}
    \label{fig:D-sub4}}
\caption{Examples of routing solutions for the downloading phase, under different options of: 1) unicast or multicast, 2) unsplittable and splittable flow, and 3) one or two FL models, where satellite 4 is the server and the overlapped gray blocks indicate the data packet replications.}
\label{fig:D-USS-UM-12}
\end{figure*}

\begin{figure}[t]
\centering
\subfloat[No CS.]{
    \includegraphics[width=0.23\textwidth]{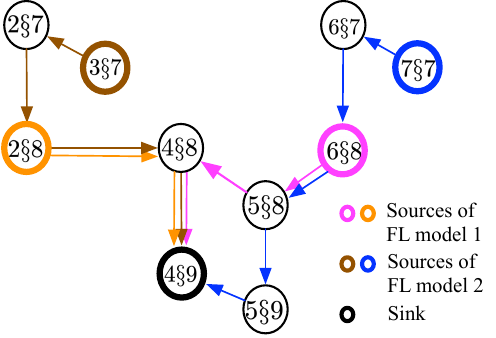}
    \label{fig:U-sub2}}\hfill
\subfloat[CS, with selected clients 2 and 7.]{
    \includegraphics[width=0.21\textwidth]{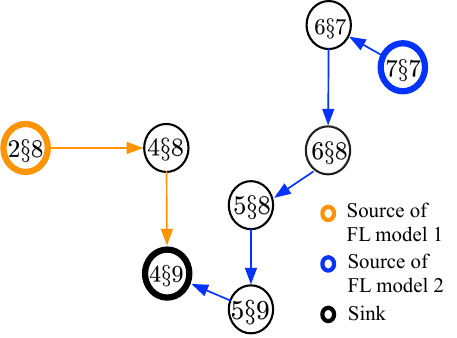}
    \label{fig:U-sub4}}
\caption{Examples of routing solutions for the uploading phase with unsplittable flow, with or without CS (ref. the clients and corresponding downloading routing in Figure  \ref{fig:D-sub4}).}
\label{fig:U-USS-CS-12}
\end{figure}

\begin{remark}
In the following, notation $i$ and $j$ are reserved for satellite indices, whereas $u$ and $v$ denote nodes in graph $\CG$. A node here is a satellite in a snapshot. The previously introduced concepts of source and sink nodes are thus extended from a satellite to its copy in a specific snapshot. Notational style of $i \S k$ is used for a node, with an exception of only one snapshot where we simply use $i$ instead of $i \S 1$. $\Box$
\end{remark}

There are two types of links in a TVG: Intra-snapshot links and inter-snapshot links. An intra-snapshot link, i.e., a bi-directional link between two satellites in a snapshot (see Figure~\ref{fig:tvg}), means a direct and stable communication connection between them within the duration of this snapshot. Unless otherwise stated, arc $(u,v)$ represents a directional arc from node $u$ to node $v$. In either direction of arcs $(u,v)$ and $(v,u)$, the capacity, denoted by $p_{uv}$ and $p_{vu}$ respectively, is the amount of data that can be accommodated within the snapshot duration, i.e., the product of average data transmission rate and the snapshot duration. Sometimes in the analysis, we normalize the capacity with respect to FL model size. An inter-snapshot link is present between the two nodes of a satellite in two consecutive snapshots, indicated by a vertical dashed arc in Figure~\ref{fig:tvg}. This type of link is used for modeling caching of data on the satellite from one snapshot to the next. More specifically, data arriving at a satellite in a snapshot become available on this satellite, and can be further transmitted from this satellite to others via intra-snapshot links in the subsequent snapshots. For an inter-snapshot link $(u,v)$ across snapshots $k$ and $k+1$ of satellite $i$, i.e., $u = i \S k$ and $v = i \S (k+1)$, we assume that the cache capacity is sufficiently large and does not impose constraints on routing, and hence its capacity $p_{uv}$ can be set to $q |\CC|$ and $q_1 |\CC_1| + q_2 |\CC_2|$ for the cases of one and two models, respectively. \par
 We illustrate different routing options for downloading phase using TVGs in Figure \ref{fig:D-USS-UM-12} (see Section \ref{sec:multicast} for the reasons why splittable flow is not considered for multicast). The same concepts of single/multiple FL models and splittable/unsplittable flow apply to uploading phase. In addition, the CS option for uploading phase is shown in Figure \ref{fig:U-USS-CS-12}.

\subsection{Objective Function}
\label{sec:obj}
\begin{figure}
\centering
  \includegraphics[width=0.48\textwidth]{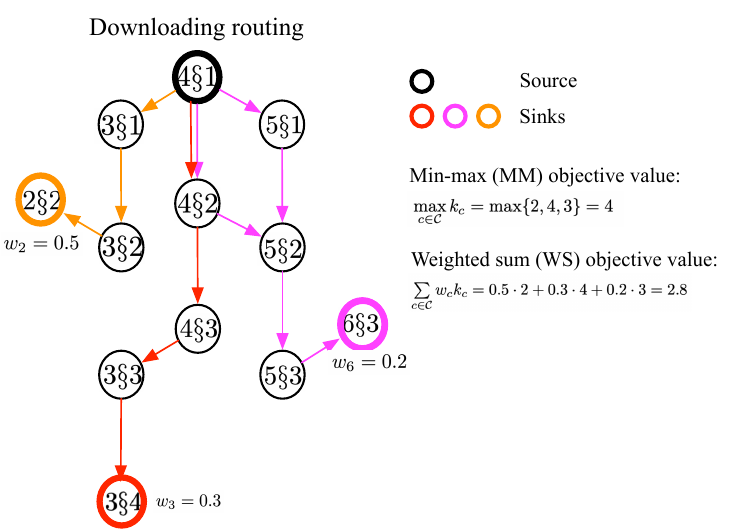}
  \caption{The two types of objective functions and their values based on a routing decision for downloading with one FL model, unicast, and splittable flow.}
  \label{fig:obj}
\end{figure}

For global model distribution, i.e., the downloading phase, the target of routing optimization is to minimize the completion time for early start of local training. This idea leads to two specific objective functions to minimize: (i) the weighted sum (WS) of the model arrival snapshot indices over the clients, and (ii) the maximum snapshot index (min-max, MM) by which all clients have received their global model(s). Denote by $k_c$ the snapshot by which client $c$ receives its global model(s). Denote by $w_c$ the weight of client $c$, reflecting its priority based on, e.g., the time needed for local training. The WS objective function is then $\sum_{c \in \CC}w_c k_c$ for single model and $\sum_{c \in \CC_1 \cup \CC_2}w_c k_c$ for two model(s). The MM objective function is $\max_{c \in \mathcal{C}}{k_c}$ for single model and $\max_{c \in \mathcal{C}_1 \cup \mathcal{C}_2}{k_c}$ for two models. While the WS objective function captures the needs of individual clients, the MM objective function is preferable when fairness is prioritized. A numerical example of the two types of objective functions is shown in Figure \ref{fig:obj}.

For the uploading phase, the objective is to receive the local models from (selected, if with CS) clients before the deadline of the current round. We address scenarios without and with CS. Without CS, the uploading routing problem is a feasibility problem to determine whether or not routing of the models of the clients can be accomplished by the last snapshot $K$\footnote{The averaging operation and global parameter updates on the server are independent of network topology and occur at the end of each FL round, hence we assume sufficient time is reserved after snapshot $K$ for this purpose.}. CS may become necessary due to limited communication resources. With CS, the routing problem amounts to utility maximization subject to the constraints that all selected models must be delivered by snapshot $K$. Here $u_c$ denotes the utility of client $c$, representing the importance of its local updates for the global aggregation. In addition, let $\kappa_c$ denote the earliest snapshot from which routing for client $c$ can begin (clients may finish local training and thus start uploading in different snapshots).\par
\begin{remark}
In this paper, we use the snapshot index as the time metric, that is, we aim to complete the routing within as early a snapshot as possible. Note that considering snapshot index for the objective function does not cause loss of generality, since a snapshot can be further sliced if better granularity is desired.$\Box$
\end{remark}

\subsection{Problem Tractability}
\label{sec:3sat}

In the realm of combinatorial optimization, assessing tractability is
fundamental to understanding whether a problem can be solved
efficiently and at scale. Tractability here refers to the existence of
a polynomial-time algorithm that guarantees global optimum. When such
an algorithm is absent (or not likely to be found), the problem is considered intractable,
typically belonging to the class of NP-complete or NP-hard problems. While the distinction is well known, it is worth recalling that
NP-completeness refers specifically to decision problems within the
class NP, whereas NP-hardness encompasses a broader class of problems
that are at least as difficult, without the requirement that they are
in NP. For an optimization problem, if its decision version is NP-complete,
then the optimization problem itself is NP-hard~\cite{du2022introduction}. Note that an NP-hardness classification is a worst-case
characterization. Approximation algorithms, relaxations, and
heuristics may still yield high-quality solutions for large instances
of an NP-hard problem. Nevertheless, a tractability analysis
provides a theoretical characterization of computational limits, to
form a conceptual foundation for the algorithmic design choices for
complex systems, such as distributed learning.\par

Establishing tractability or intractability generally relies on the
following concepts.  Polynomial-time solvability can be proven, either
by explicitly constructing an algorithm that always attains the global
optimum, or by demonstrating that the problem lies in a sub-class of
some known tractable problem. NP-hardness is typically proven via
polynomial-time reduction from a known NP-hard problem (such as one
in the list provided in~\cite{GareyJohnson1979}), i.e., the known hard
problem is a sub-class of our target problem. We remark that for the
hardness proofs later in the paper, the reductions themselves are
clearly polynomial time and therefore we will not state this fact
explicitly in the proofs.

\textbf{[3SAT Problem]} Since some of our analysis of problem reduction will use the well-know NP-complete problem
3-Satisfiability (3SAT), we give the problem definition here.
Consider $m$ Boolean variables and $n$ clauses.  For every variable
$x_\ell, \ell = 1,\dots, m$, there are two literals: the variable itself
$x_\ell$ and its negation ${\bar x}_\ell$, giving a total of $2m$
literals. A clause $e_h,  h = 1, \dots, n$ is a disjunction of exactly
three literals.  The 3SAT formula is the conjunction of all clauses,
and the question is whether or not there is an assignment of
truth/false values to the $m$ variables such that all $n$ clauses hold
true. Throughout the paper, we assume without any loss of generality
that a given 3SAT instance is irreducible. For example, the two
literals of a variable both appear in some (but not the same) 
clauses as otherwise setting its value is trivial.


\section{Global Model Distribution: Unicast}
\label{sec:unicast}

\begin{table}[!t]
\centering
\caption{Definitions of the used notations.}
\label{tab:notation}	
\begin{tabular}{lc}
\toprule
\textbf{Notations} & \textbf{Definitions} \\ \midrule
$\CI$ & Set of satellites \\
$\CK$ & Set of snapshots \\
$I$ & Number of satellites \\
$K$ & Number of snapshots \\ 
$\CC$ & Set of clients (Client sets $\CC_1$ and $\CC_2$ for two models)\\
$s$ & Server (Servers $s_1$ and $s_2$ for two models)\\
$q$ & Model size (Model sizes $q_1$ and $q_2$ for two models)\\
$\CG= (\CV, \CA)$ & \begin{tabular}[c]{@{}c@{}}TVG with node set $\CV$ and arc set $\CA$\end{tabular} \\
$i \S k$ & Satellite $i$ in snapshot $k$ \\
$(u,v)$ & \multicolumn{1}{c}{\begin{tabular}[c]{@{}l@{}}Arc from node $u$ to node $v$ in a TVG\end{tabular}} \\
$p_{uv}$ & \multicolumn{1}{c}{Capacity of arc $(u,v)$}\\
$w_c$ & Weight of client $c$ \\
$k_c$ & \multicolumn{1}{c}{Snapshot by which client $c$ finishes the downloading}\\
$\kappa_c$ & Snapshot from which client $c$ starts its uploading\\
$u_c$ & Utility of client $c$ for CS\\
$x_\ell$ and $\bar x_\ell$ & The $\ell$th variable and its negation in a 3SAT problem\\ 
$e_h$ & The $h$th clause in a 3SAT problem\\
$a_c$ & Auxiliary node for client $c$\\
$\CT^M$ & Set of terminal nodes in a Steiner arborescence\\
$\CH^*$ & Optimal minimum-cost Steiner arborescence\\
\bottomrule
\end{tabular}
\end{table}

We start by considering global FL model distribution with unicast
routing, where each client has its own data flow from its server.
Here, for each client, one can opt to restricting the entire model to
be routed on a single path, i.e., unsplittable flow, or allowing the
use of multiple paths, each carrying a fractional amount of flow,
named splittable flow (aka bifurcated flow). Whereas unsplittable flow
avoids the complexity of reassembling packets at the destination and
simplifies routing configuration, splittable flow is generally able to
utilize the network capacity better with load balancing. Therefore we consider both cases
in our analysis. Hence, there are eight problem variants by the choice of objective
function, whether the flow is splittable, and whether there is one or
two FL models. For the scenario of two FL models, we assume that they
share a common server, and discuss later the case of separate servers. For clarity, each problem variant is denoted by a three-part
abbreviation in the form [Number of models]-[flow type]-[objective].
Specifically, we use 1 or 2 to indicate single or dual FL models, SF
and UF for splittable or unsplittable flows, and WS or MM for
weighted-sum or min-max objectives, respectively. The summary of notations used in this paper are given in Table \ref{tab:notation}. The tractability results under unicast for the downloading phase are shown in Table~\ref{tab:downloading}.\par

\begin{table}[t]
\centering
\caption{Problem variants and their computational complexity for unicast global model distribution.
(UF = unsplittable flow, SF = splittable flow, WS = weighted sum, MM =
min-max, P = polynomial time.)}
\begin{tabular}{c|cc|cc}
\toprule
\multirow{2}{*}{\textbf{Models}} &
\multicolumn{2}{c|}{\textbf{UF}} &
\multicolumn{2}{c}{\textbf{SF}} \\
\cmidrule(lr){2-3} \cmidrule(lr){4-5}
 & \textbf{WS} & \textbf{MM} & \textbf{WS} & \textbf{MM} \\
\midrule
\textbf{1} &
\begin{tabular}[c]{@{}c@{}}1-UF-WS\\ \textit{(P)}\end{tabular} &
\begin{tabular}[c]{@{}c@{}}1-UF-MM\\ \textit{(P)}\end{tabular} &
\begin{tabular}[c]{@{}c@{}}1-SF-WS\\ \textit{(NP-hard)}\end{tabular} &
\begin{tabular}[c]{@{}c@{}}1-SF-MM\\ \textit{(P)}\end{tabular} \\[4pt]
\textbf{2} &
\begin{tabular}[c]{@{}c@{}}2-UF-WS\\ \textit{(NP-hard)}\end{tabular} &
\begin{tabular}[c]{@{}c@{}}2-UF-MM\\ \textit{(NP-hard)}\end{tabular} &
\begin{tabular}[c]{@{}c@{}}2-SF-WS\\ \textit{(NP-hard)}\end{tabular} &
\begin{tabular}[c]{@{}c@{}}2-SF-MM\\ \textit{(P)}\end{tabular} \\
\bottomrule
\end{tabular}
\label{tab:downloading}
\end{table}

\subsection{Tractable Problem Variants}

\begin{theorem}
\label{theo:1ufws}
1-UF-WS is solvable in polynomial time.
\end{theorem}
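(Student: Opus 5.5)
The idea is to show that, with a single model, unicast routing under unsplittable flow reduces to a \emph{single-commodity} integer min-cost flow problem on an augmented TVG, which is solvable in polynomial time and has integral optimal solutions. The key observation is that all flows share the same source $s$ and the same demand $q$. After normalizing capacities by the model size, each client simply needs one unit of flow from $s$. An integral unit-flow decomposition then gives exactly one path per client, which is an unsplittable routing.

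\emph{Construction.} Take the TVG $\CG=(\CV,\CA)$ and replace every arc capacity $p_{uv}$ by $\lfloor p_{uv}/q\rfloor$, the number of whole models the arc can carry. In particular, an inter-snapshot arc gets capacity $|\CC|$. Let the source be $s\S 1$, with supply $|\CC|$. For every client $c\in\CC$, introduce an auxiliary node $a_c$ and, for every $k\in\CK$, an arc $(c\S k, a_c)$ with capacity $1$ and cost $w_c k$. Add an arc $(a_c,t)$ of capacity $1$ to a super sink $t$ with demand $|\CC|$. All original TVG arcs have cost $0$. I then claim that min-cost flows of value $|\CC|$ correspond to optimal 1-UF-WS solutions; if the maximum flow is below $|\CC|$, the instance is infeasible.

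\emph{Correspondence.} Given a feasible unsplittable routing, each client $c$ has a path from $s$ ending at $c\S k_c$. Extend it by $a_c$ and $t$ and sum the unit flows. Because each path carries $q$ and the paths jointly respect $p_{uv}$, the number of paths using any arc is at most $\lfloor p_{uv}/q\rfloor$. Hence this is a feasible integral flow whose cost equals $\sum_c w_c k_c$.

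Conversely, by the integrality theorem there is an integral optimal min-cost flow, computable in polynomial time (for instance by successive shortest paths, since the flow value $|\CC|$ is polynomial). Decompose it into $|\CC|$ unit $s$--$t$ paths plus possibly some cycles. The cycles consist of zero-cost TVG arcs only, since any cycle through $a_c$ must use $t$, which has no outgoing arc. Removing them keeps feasibility and cost. Since $(a_c,t)$ has capacity $1$, exactly one path passes through each $a_c$. That path enters $a_c$ from some $c\S k$, so it defines a single route delivering the whole model to $c$ by snapshot $k$, at cost $w_c k$. Loading $q$ on each path respects the original capacities, again because at most $\lfloor p_{uv}/q\rfloor$ paths share an arc.

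\emph{Main subtlety.} The step needing most care is showing that the definition of $k_c$ (the snapshot by which $c$ holds the model) matches the chosen exit arc. A path may traverse $c$'s nodes as a relay earlier, or another client's path may pass through $c$. This does not hurt optimality. If $c$'s own path could exit earlier at some $c\S k'$ with $k'<k$, truncating the path there would free capacity and strictly reduce cost when $w_c>0$. When $w_c=0$ the choice is irrelevant to the objective. Relay traversals by other clients' paths do not deliver $c$'s unicast copy under the model, so they do not affect $k_c$. I would therefore define $k_c$ in the solution as the exit snapshot of $c$'s own path, and argue via truncation that an optimal flow exits at the earliest visit. The construction has $O(|\CV|+|\CC|K)$ nodes and arcs, so the whole procedure is polynomial, which establishes the theorem.
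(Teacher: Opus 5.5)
Your proposal is correct and is essentially the paper's proof. It uses the same auxiliary node $a_c$ per client with unit-capacity arcs $(c \S k, a_c)$ of cost $k w_c$, capacities normalized to $\lfloor p_{uv}/q \rfloor$, supply $|\CC|$ at $s \S 1$, and the integrality theorem for min-cost flow. Your super sink $t$ is equivalent to the paper's unit demands at the $a_c$. The routing-to-flow direction, cycle removal, and truncation argument only spell out steps the paper states briefly, such as an optimal flow exiting at the earliest arrival at $c$.
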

\begin{proof}
To prepare for the proof, we introduce an expanded topology for 
the inter-snapshot links of each client, as illustrated in
Figure~\ref{fig:mcf}. For each client $c$, an auxiliary node
$a_c$ is added. An arc is added to connect each node, $c \S k$, $k=1, \dots, K$, to node $a_c$. 

\begin{figure}
\centering
  \includegraphics[width=0.35\textwidth]{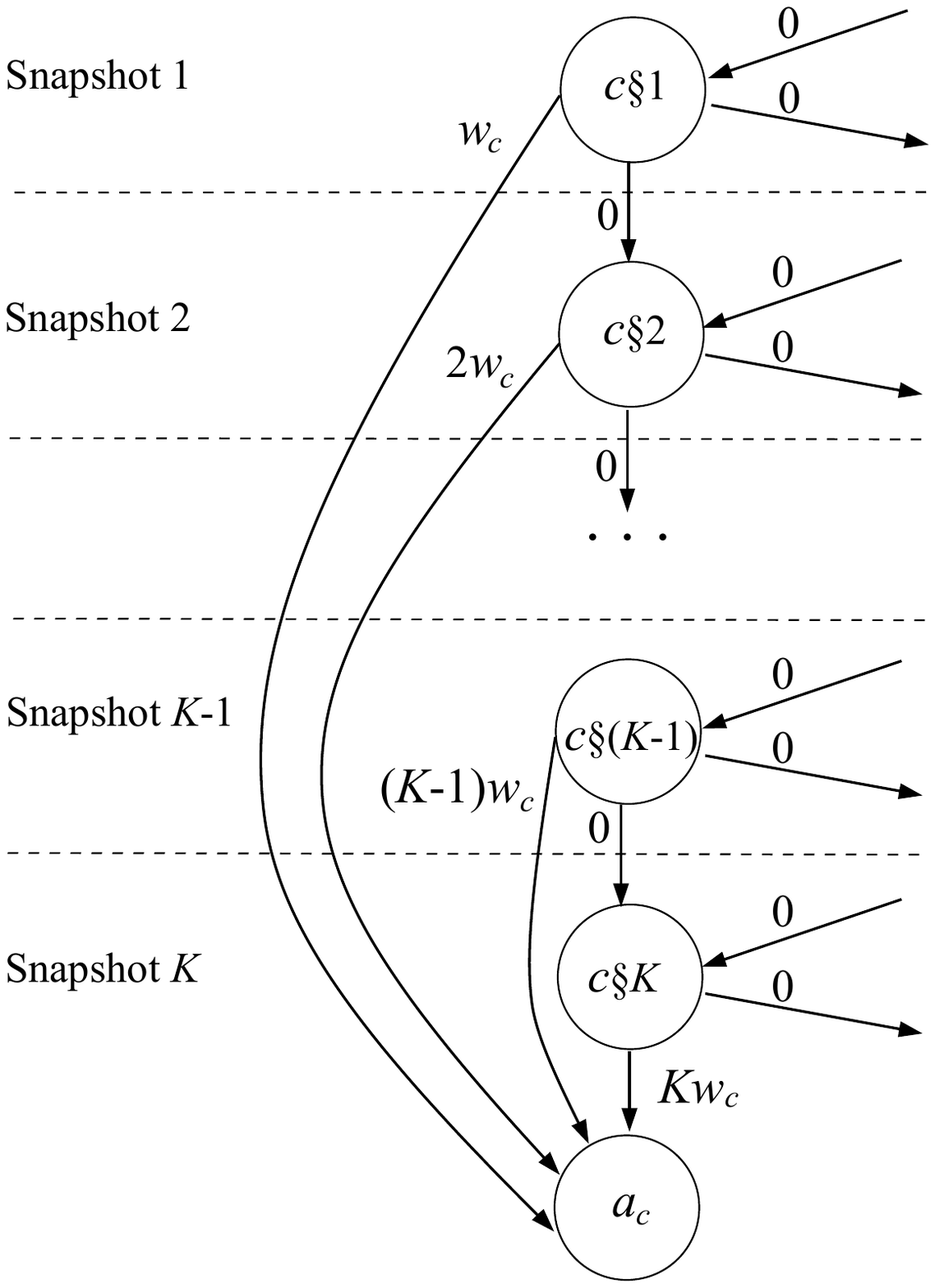} \caption{An
  illustration of expanded topology of inter-snapshot arcs and the
  unit flow cost for solving 1-UF-WS.}  \label{fig:mcf}
\end{figure}

Since there is a single FL model and routing takes place on a single path
for each client, we assume without loss of generality that an arc's
capacity is normalized to how many times the model can be accommodated
on the arc, i.e., $\lfloor p_{uv} / q \rfloor$ for arc $(u,v)$.  
For each client $c$, the inter-snapshot arcs $(c \S k, a_c), k =1, \dots, K$ 
all have (normalized) capacity 1.0 in the expanded topology.

Let the server node
of the first snapshot, i.e., $s \S 1 $, be a source node with flow
supply $|\CC|$, and for each client $c \in
\CC$, $a_c$ is a sink node of flow demand $1.0$.
For client $c$, the unit flow cost of arc $(c \S k, a_c), k=1,\dots,
K$, is set to be $k w_c$, whereas all other arcs have a unit flow cost
of zero, as illustrated in Figure~\ref{fig:mcf}.

Consider solving the problem of minimum-cost flow with the defined
capacity, flow supply and demand, and unit flow costs.  As a classic
result of optimal network flow, because the arc capacities as well as
flow supply and demand values are all integer, at optimum the flow is
integer, even if we solve the minimum-cost flow problem as a
continuous linear program (LP) without explicitly imposing the
integrality requirement~\cite[Theorem~9.10]{ahuja1993network}.
Therefore, in the optimal flow, for each client $c$ and its sink node
$a_c$, node $a_c$ has a flow of 1.0 on exactly one of the incoming
arcs. Note that there are $K$ such incoming arcs, each having
the unit flow cost that equals the WS objective function value.  Also,
if the flow destined to $a_c$ arrives $c \S k$ with $k < K$, it
will be routed on arc $(c \S k, a_c)$ without being buffed because
doing so will increase the total cost. Moreover, as the unit flow cost
is zero for all intra-snapshot arcs as well as the arcs for buffering
on a satellite for the flow of other satellites, there will be no
additional cost.  Therefore, the optimal minimum-cost flow solution is
the optimum of 1-UF-WS.
\end{proof}

\begin{remark}
In the proof, the flow demand of client $c$ is set in node $a_c$. However, backtracking leads to the snapshot in which the model actually arrives client $c$.  $\Box$
\end{remark}

\begin{theorem}
\label{theo:1ufmm}
1-UF-MM is solvable in polynomial time.
\end{theorem}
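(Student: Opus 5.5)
The plan is to reduce 1-UF-MM to a sequence of max-flow feasibility checks, one per candidate value of the objective. The min-max objective $\max_{c\in\CC} k_c$ takes at most $K$ distinct values $\bar k \in \{1,\dots,K\}$. For each candidate $\bar k$, I would decide whether every client can receive the model by snapshot $\bar k$ using unsplittable unicast paths. The optimum is the smallest feasible $\bar k$, found either by scanning $\bar k = 1,\dots,K$ or by binary search, since feasibility is monotone in $\bar k$: any routing finishing by $\bar k$ also finishes by $\bar k+1$.

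For a fixed $\bar k$, I would reuse the expanded topology from the proof of Theorem~\ref{theo:1ufws}. As there, capacities are normalized to $\lfloor p_{uv}/q\rfloor$ and the inter-snapshot caching arcs are kept. The auxiliary sink $a_c$ is added for each client $c$, but only the arcs $(c \S k, a_c)$ with $k \le \bar k$ are included, each with capacity $1$. Node $s \S 1$ is the source with supply $|\CC|$, and each $a_c$ has demand $1$. Alternatively, add a super-sink $t$ joined from every $a_c$ by an arc of capacity $1$, and ask whether the maximum $s\S 1$--$t$ flow equals $|\CC|$. Nodes and arcs in snapshots after $\bar k$ can be deleted; this is harmless either way.

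The key step is to show that a feasible integral flow corresponds exactly to a feasible 1-UF routing completing by $\bar k$. All capacities and demands are integer, so by the integrality theorem for network flows \cite[Theorem~9.10]{ahuja1993network} the max flow can be taken integral, and it can be computed in polynomial time. An integral flow of value $|\CC|$ decomposes into $|\CC|$ unit path flows, plus possibly cycles that can be discarded. Each path ends at a distinct $a_c$ through some arc $(c\S k,a_c)$ with $k\le \bar k$. Under the normalization, a unit of flow is one whole model copy on a single path, so this is an unsplittable routing. Arc-capacity constraints in unit terms become exactly the constraint that the total data $q\cdot(\text{number of paths})$ on each arc is at most $p_{uv}$, and the client is reached by snapshot $\bar k$. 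Conversely, any feasible unsplittable routing by $\bar k$ gives such an integral flow.

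The main obstacle is this equivalence, specifically checking that the floor normalization loses nothing. Since each client's flow is an entire model of size $q$ on one path, the number of paths an arc can carry is exactly $\lfloor p_{uv}/q\rfloor$, so the normalization is exact. With that established, the total complexity is $O(\log K)$ or $K$ max-flow computations on a graph with $O(IK + |\CC|)$ nodes, which is polynomial.
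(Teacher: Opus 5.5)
Your proposal is correct and takes essentially the same approach as the paper. The paper also bisects over the snapshot index and, for each trial snapshot, checks feasibility of an integral flow on the time-truncated TVG with capacities normalized to $\lfloor p_{uv}/q\rfloor$, using flow integrality to obtain one path per client. The only difference is cosmetic: you test whether a max-flow reaches value $|\CC|$, whereas the paper solves a min-cost flow with unit-cost bypass arcs $(s\S 1, a_c)$ and checks whether the optimal cost is zero, which is an equivalent feasibility test.
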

\begin{proof}
As discussed earlier, 1-UF-MM can be solved by bi-section search over
the snapshots, and for a trial snapshot $k$ we need to determine if
there is a feasible flow for all clients until this snapshot. Thus we
consider the nodes and arcs of snapshots $1, 2, \dots, k$ of graph
$\CG$ (without the expansion used in the previous proof), with the
same definition of capacity as in the previous proof.  Server $s$
remains the source of supply $|\CC|$, and node $a_c$ is an added sink (after snapshot $K$) of
demand 1.0 for each client $c$. Also, for each client $c$ we add an
arc $(s\S 1, a_c)$ with unit flow cost 1.0 and capacity 1.0.  The flow
cost of all other arcs are set to be zero.  Solving the minimum-cost
flow problem (which is always feasible due to the addition of the
direct arc from the server to each client's sink node). If the total
flow cost equals zero, i.e., no direct arc is used, routing is
feasible using the original arcs, otherwise it is infeasible.
\end{proof}

Since the MM objective function boils down to determining the
existence of feasible flow, it is straightforward to see that, with
splittable flow, this can be done using a similar idea as in the proof
of Theorem~\ref{theo:1ufmm} without normalizing the capacity nor
flow supply or demand.  Checking feasibility then amounts to solving
an LP for optimal (splittable flow), giving the following corollary.

\begin{corollary}
1-SF-MM is solvable in polynomial time.
\end{corollary}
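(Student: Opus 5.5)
Following the remark after Theorem~\ref{theo:1ufmm}, the plan is to decide 1-SF-MM by bi-section over the snapshot index $k \in \{1,\dots,K\}$. Each trial $k$ is answered by one LP feasibility test, so only $O(\log K)$ LPs are needed. The first step is to justify the bi-section, i.e., that feasibility is monotone in $k$. If all clients can receive their full model by snapshot $k$, the same flow is still valid when snapshots $k+1,\dots,K'$ are appended: every client has finished, and no extra arcs need to carry flow. Hence the optimal MM value is the smallest feasible $k$, or the instance is infeasible if even $k=K$ fails.

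For a trial $k$, I will build the subgraph of $\CG$ induced by snapshots $1,\dots,k$, keeping the original (unnormalized) capacities $p_{uv}$. The inter-snapshot arcs get the large caching capacity defined in Section~\ref{sec:TVG}. Node $s \S 1$ is the source with supply $q|\CC|$. For each client $c$, I add a sink $a_c$ with demand $q$ and arcs $(c \S k', a_c)$ for $k'=1,\dots,k$, each of capacity $q$ and zero cost. I also add a direct arc $(s\S 1, a_c)$ of capacity $q$ and unit cost $1$, so the min-cost flow problem is always feasible. This is the same construction as in the proof of Theorem~\ref{theo:1ufmm}, except that capacities are not rounded to multiples of $q$ and no integrality is required.

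Next I solve this minimum-cost flow as a continuous LP, in polynomial time, e.g., by interior-point methods or by a strongly polynomial min-cost flow algorithm~\cite{ahuja1993network}. The key correspondence to prove is this: a zero-cost optimum exists if and only if there is a splittable routing in which every client receives its full model by snapshot $k$. For the forward direction, decompose the flow into paths (flow decomposition). Each client's $q$ units of demand then reach $a_c$ through copies $c \S k'$ with $k' \le k$, i.e., through multiple paths carrying fractional amounts. This is exactly a splittable unicast routing, and the arc capacities guarantee that the combined flow on each ISL is within $p_{uv}$. The converse is immediate: any such routing is a zero-cost flow in the construction.

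The step I expect to need most care is modeling ``$c$ has received its model by snapshot $k$.'' Flow destined to client $c$ may transit through copies $c \S k'$ while also serving as relay traffic for other clients. Because the flow is commodity-free, this is not a problem: the min-cost flow problem is single-commodity, since there is one source and all sinks demand copies of the same model via unicast. Path decomposition assigns flow to sinks consistently, so the model can be reassembled from the fractions arriving at $c$. Finally, I will note that the total cost is $O(\log K)$ LP solves of size polynomial in $I$ and $K$, which establishes polynomial time.
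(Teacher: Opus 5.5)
Your proposal is correct and takes the paper's route: bi-section over the snapshot index, with each trial $k$ decided by the min-cost flow construction of Theorem~\ref{theo:1ufmm}, solved as a continuous LP without normalizing capacities, supply, or demand. Two details differ from the paper without affecting correctness. Connecting $a_c$ from every $c \S k'$ with $k' \le k$, instead of using the unexpanded graph, is equivalent because the caching arcs are never binding. Your explicit monotonicity and flow-decomposition arguments spell out what the paper leaves implicit.
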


Next, one observes that for two models, feasibility of splittable flow
can be determined in a similar manner, by setting a flow supply
of $q_1 |\CC_1| + q_2 |\CC_2|$ at server node $s \S 1$, and let
a client' sink node have a flow demand of the corresponding
model size (or the sum if the client is in $\CC_1 \cap \CC_2$).
With such a construction, it is easy to see
2-SF-MM is tractable as well, giving the corollary below.


\begin{corollary}
\label{theo:2sfmm}
2-SF-MM is solvable in polynomial time.
\end{corollary}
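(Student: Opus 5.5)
The plan is to follow the proof of Theorem~\ref{theo:1ufmm}: binary search over the snapshot index, and at each trial snapshot solve one linear program. Because the objective is MM, the optimal value $k^*$ is the smallest $k \in \CK$ such that every client $c \in \CC_1 \cup \CC_2$ can receive all of its required model data by snapshot $k$. Feasibility is monotone in $k$, since any routing that finishes by snapshot $k$ also finishes by $k+1$. So bisection over $\{1,\dots,K\}$ needs only $O(\log K)$ feasibility tests, and the result follows once each test is shown to run in polynomial time.

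For a trial $k$, restrict $\CG$ to snapshots $1,\dots,k$ and keep the original (unnormalized) capacities $p_{uv}$. Inter-snapshot arcs get capacity $q_1|\CC_1|+q_2|\CC_2|$, as in Section~\ref{sec:TVG}. The supply at $s\S 1$ is $q_1|\CC_1|+q_2|\CC_2|$. For each client $c$ we add a sink $a_c$ with arcs $(c\S k', a_c)$ for $k'=1,\dots,k$, and give $a_c$ the demand
\begin{equation*}
d_c = q_1\mathbf{1}[c\in\CC_1]+q_2\mathbf{1}[c\in\CC_2].
\end{equation*}
As in Theorem~\ref{theo:1ufmm}, we also add a direct arc $(s\S 1,a_c)$ with capacity $d_c$ and unit cost 1, and give all other arcs cost 0. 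This makes the problem always feasible. We then solve the resulting single-commodity minimum-cost flow as an LP. The trial $k$ is declared feasible exactly when the optimal cost is zero.

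The main obstacle is justifying that one single commodity suffices, even though the two models carry different data. A client in $\CC_1$ must receive exactly $q_1$ units of model-1 data, not just $q_1$ units of something. I would argue this in two directions.
\begin{itemize}
\item Any feasible two-model splittable routing, summed over the two models, is a feasible single-commodity flow with zero cost.
\item Conversely, both models share the common source $s\S 1$, and the flow is splittable. So any single-commodity flow that meets the sink demands decomposes into path flows from $s\S 1$ to the sinks. Each sink $a_c$ receives exactly $d_c$ units along its paths, and we can label $q_1$ units of them as model 1 and $q_2$ units as model 2. This yields a two-commodity flow obeying the same arc capacities, because the per-arc sum is unchanged.
\end{itemize}

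Finally, the path decomposition is only an existence argument, but it is also constructive in polynomial time, so the routing itself can be recovered. Combined with the $O(\log K)$ LP solves, this gives polynomial solvability of 2-SF-MM. The same argument extends directly to more than two models that share a server.
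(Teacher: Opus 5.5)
Your proposal is correct and follows essentially the same route as the paper: bisection over the snapshot index, with each trial reduced to a single-commodity feasibility LP. In that LP the server in snapshot one has supply $q_1|\CC_1|+q_2|\CC_2|$, and each client's sink has demand equal to its model size, or $q_1+q_2$ for clients in $\CC_1\cap\CC_2$. Your path-decomposition argument, showing that merging the two models into one commodity loses nothing under a common source and splittable flow, justifies a step the paper only calls easy to see; just make sure the auxiliary arcs into each $a_c$ are uncapacitated or have capacity at least $d_c$.
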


\subsection{NP-complete and NP-Hard Problem Variants}

We now prove that each of the remaining four problem variants in
Table~\ref{tab:downloading} are NP-hard.  We begin with
1-SF-WS that is the only NP-hard variant for distributing a single FL
model. In this case, the flow may arrive at a client over multiple
snapshots. By the WS objective function, for one client, only the last
snapshot index is of significance in performance, even if the client
would have received most parts of the FL model in earlier snapshots. One may argue
however this does make sense, since the client need the entire model
before its local training can start.

\begin{theorem}
\label{theo:1sfws}
1-SF-WS is NP-hard.
\end{theorem}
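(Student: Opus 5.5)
We prove NP-hardness by reducing 3SAT to the decision version of 1-SF-WS: given a TVG, a single model of size $q$, weights $w_c$ and a bound $B$, is there a splittable routing with $\sum_{c\in\CC} w_c k_c \le B$? The property we exploit is that $k_c$ is the \emph{last} snapshot in which client $c$ receives any part of its model. A fractional split therefore earns nothing unless it completes a client early, so splittability cannot smooth away the combinatorial choice of which clients finish in snapshot $1$.

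We use $K=2$ snapshots. Snapshot $2$ gets ample capacity on all arcs, so every client can always finish by snapshot $2$. The whole question then becomes which set of clients can be fully served within snapshot $1$.

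Snapshot $1$ is built from the 3SAT instance as follows.
\begin{itemize}
\item For each variable $x_\ell$, add a hub node $v_\ell$ fed by the server through one arc of capacity $d_\ell q$, where $d_\ell$ is the number of occurrences of $x_\ell$ and $\bar x_\ell$.
\item From $v_\ell$, add arcs to two literal nodes $t_\ell$ and $f_\ell$.
\item Attach to $v_\ell$ two heavy clients, one behind $t_\ell$ and one behind $f_\ell$, each of weight $W$ with $W$ large. Choose capacities so that the hub can complete one heavy client in snapshot $1$ only if it pours all of its remaining capacity through that literal node.
\item For each clause $e_h$, add a client of weight $1$. Connect it from the literal nodes of the three literals in $e_h$, each by an arc of capacity $q$.
\item Set $B$ equal to $W\cdot m\cdot 1 + W\cdot m\cdot 2 + n\cdot 1$. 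This means one heavy client per variable finishes at snapshot $1$, the other finishes at snapshot $2$, and every clause client finishes at snapshot $1$.
\end{itemize}

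The proof then has two directions.
\begin{itemize}
\item \emph{Satisfying assignment $\Rightarrow$ routing.} Send each hub's flow through its true literal node. This completes the true-side heavy client and supplies $q$ to each clause containing that literal. Since the assignment satisfies every clause, every clause client receives $q$ in snapshot $1$, so the objective is at most $B$.
\item \emph{Routing $\Rightarrow$ satisfying assignment.} The choice of $W$ forces exactly one heavy client per variable to complete in snapshot $1$. The capacity accounting then forces that hub's flow through only one of $t_\ell,f_\ell$. This defines a consistent truth assignment. Every clause client that completes in snapshot $1$ must receive its flow from a literal node carrying flow, and such a node corresponds to a true literal. Hence the assignment satisfies the formula.
\end{itemize}

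The main obstacle is the variable gadget. Splittable flow lets a hub send, say, half its capacity to each literal node. That could complete a clause client by summing fractional pieces from several literals, including pieces coming through both literals of the same variable.

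My first attempt to block this is to calibrate capacities so that completing the heavy client on a side absorbs all hub flow except exactly the amount routed onward on that side. Any split then leaves both heavy clients incomplete, which costs at least $W > n$ in the objective. If this calibration is awkward, the fallback is to give each literal occurrence its own relay node. Each relay gets capacity exactly $q$ into its clause and is reachable only from the chosen literal node, so a clause client can complete only from at least one fully "true" occurrence.

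Finally, because the reduction uses only two snapshots and polynomially many nodes and arcs, the construction is polynomial. NP-completeness of the decision version then gives NP-hardness of 1-SF-WS.
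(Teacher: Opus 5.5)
Your overall frame is the right one: a reduction from 3SAT, two snapshots, the fact that only the last snapshot of a client counts, and a tight bound $B$. But the variable gadget cannot work as specified, and the failure is not the fractional-split issue you flag.

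In snapshot $1$ a clause client can obtain its model only through arcs from literal nodes. So for the forward direction the hub $v_\ell$ must carry $q$ for the true-side heavy client plus a full $q$ for every clause containing the true literal. Its incoming capacity must therefore be at least $2q$, as your $d_\ell q$ is. Each side must also admit $q$, since either heavy client must be completable. Hence the hub can simply send $q$ through $t_\ell$ and $q$ through $f_\ell$, completing \emph{both} heavy clients in snapshot $1$. Doing this for every variable gives cost at most $2Wm+2n<3Wm+n=B$ (as $W>n$). So every 3SAT instance, satisfiable or not, maps to a yes-instance.

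The failing step is ``the choice of $W$ forces exactly one heavy client per variable.'' A large $W$ only forces \emph{at least} $m$ snapshot-$1$ completions. An upper bound of one per variable can only come from capacity, and that is incompatible with pushing a whole model to a clause through the hub. Neither recalibrating $v_\ell\to t_\ell$ and $v_\ell\to f_\ell$ nor per-occurrence relay nodes removes this tension.

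There is also a second leak, even if you cap the hub. With the heavy client behind $t_\ell$ and clause arcs leaving $t_\ell$, nothing stops the hub from completing that heavy client via $t_\ell$ while sending its spare capacity through $f_\ell$ to clauses containing $\bar x_\ell$.

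The paper avoids both problems with two devices your construction lacks.
\begin{itemize}
\item \emph{Dedicated clause supply.} Each clause client gets its own arc from the server carrying almost all of its model (with $q=1$, capacity $\frac{n}{n+1}$). In snapshot $1$ it then needs only a top-up of $\frac{1}{n+1}$ through literal arcs of capacity $\frac{1}{n+1}$. The hub capacity can therefore be $1+\frac{n}{n+1}<2$, the value matching the slack $\frac{n}{n+1}$ used in the paper's argument. This forbids completing both sides while leaving enough slack for all top-ups on one side.
\item \emph{Opposite-literal orientation.} The two literal nodes are themselves unit-weight clients fed by capacity-$1$ arcs, and a clause containing literal $y$ is fed from the node of $\bar y$. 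Completing $y$ in snapshot $1$ saturates $y$'s own incoming arc. The slack can therefore transit only through the uncompleted partner $\bar y$, whose outgoing arcs reach exactly the clauses containing $y$.
\end{itemize}

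With unit weights and bound $3m+n$, each literal pair contributes at least $3$ and each clause at least $1$. Meeting the bound thus forces exactly one snapshot-$1$ completion per pair and every clause completed in snapshot $1$. This tight accounting, together with the saturation argument, is what neutralizes splittable flow; no heavy weights are needed.
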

\begin{proof}
Consider a generic instance of 3SAT as defined in
Section~\ref{sec:3sat}.  The reduction uses the following special case
of 1-SF-WS.  The model size $q = 1.0$, and there are two snapshots.
For the first snapshot, the graph topology relates to the 3SAT
instance as follows, with an illustration given in
Figure~\ref{fig:1sfws}.

\begin{figure}
\centering
  \includegraphics[width=0.48\textwidth]{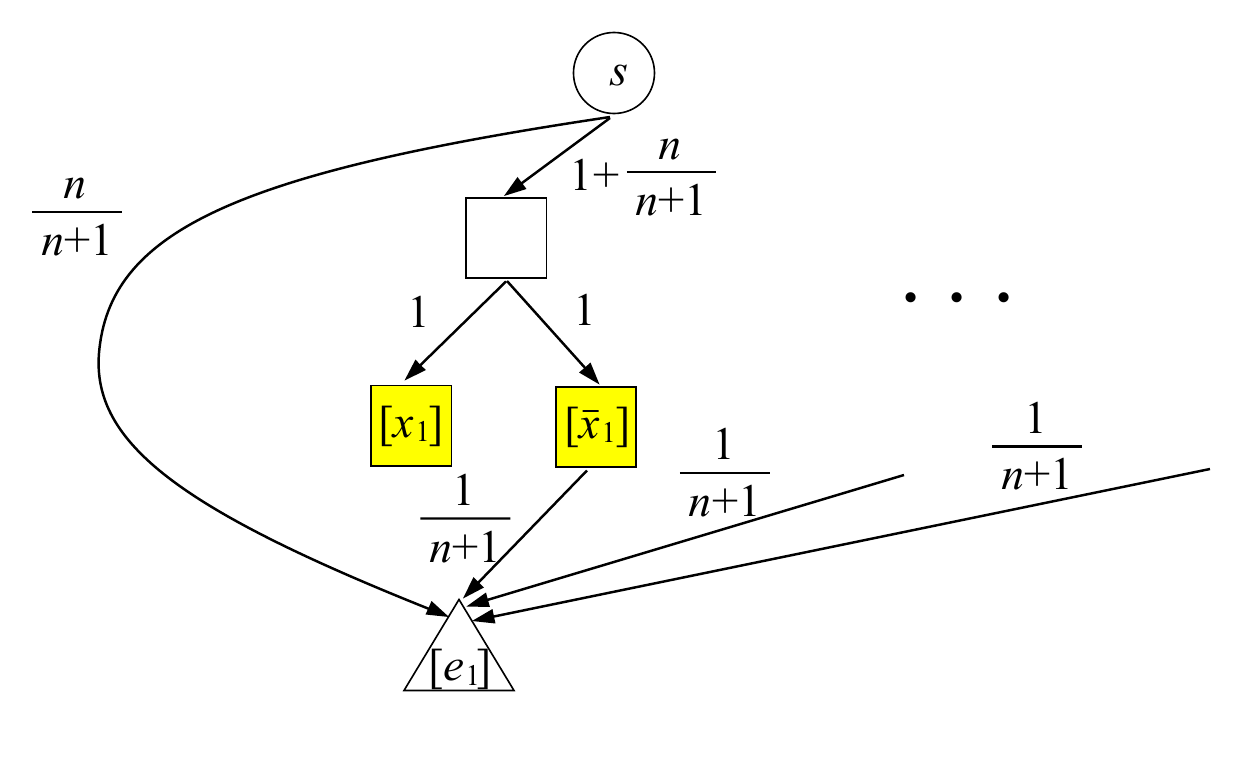}
  \caption{An illustration of reduction from 3SAT to 1-SF-WS. ($[\cdot]$ inside a node
indicates the corresponding entity of this node in 3SAT. The numbers beside the arcs
are the capacity values.)}
  \label{fig:1sfws}
\end{figure}

As can be seen from the figure, for each Boolean variable $x_\ell$, three
(square) nodes are present. There is one arc from server $s$ to one
square node of capacity $1 + \frac{1}{n}$.  This node is connected to
each of the other two square nodes (in yellow color) with an arc. The
capacity of these two arcs equals $1.0$. In the proof, the two yellow
nodes correspond to a Boolean variable and its negation ($x_1$ and
${\bar x}_1$ in the illustration), and they are referred to as the
literal nodes.  There is one (triangle) node for each clause (such as
clause $e_1$ in the illustration). For a clause node, there is a
direct arc from the server with capacity $\frac{n}{n+1}$.  Moreover,
there are exactly three additional incoming arcs to a clause, emanating from the
three literal nodes representing the negation of the three literals of
this clause in the 3SAT instance, with capacity $\frac{1}{n+1}$. In
the illustration, for example, literal $x_1$ is in clause $e_1$ in the 3SAT instance,
therefore the arc from the the negation ${\bar x}_1$ to the clause
node.

The set of clients is composed by the two yellow literal nodes for
each Boolean variable, and the clause node of each clause. There are
thus $2m+n$ clients. All clients are of unit weight. For the second
snapshot, the $4n$ arcs coming into the clause nodes are no longer
present.

We ask the question: For the given 1-SF-WS instance, is it feasible
to route the FL model to all the $2m+n$ clients within the two snapshots?
Note that for each pair of literal nodes, it is not possible to route
the entire flow of two units to them in the first snapshot.  Hence,
for such a client pair, two snapshots are needed, such that one will
have a WS objective function value of 1.0, and other 2.0, or vice versa. Since there is no way
to achieve a better value of 3.0 for these two together, without
loss of optimality, we can assume that the server sends the whole model of
size 1.0 to one of them in the first snapshot and zero flow to the
other, and the latter receives a flow of 1.0 in the second
snapshot. Doing so leaves the maximum spare capacity of
$\frac{n}{n+1}$ for routing in the first snapshot to the clause nodes
that have no connection at all in the second snapshot.

For a clause node, it receives a flow of amount $\frac{n}{n+1}$ via
the direct (and dedicated) arc from the server in the first snapshot.
Hence the routing is complete for a clause node, if and only if it can
receive an additional amount of $\frac{1}{n+1}$ in the first snapshot via
some literal node.

Suppose in the first snapshot, for literal pair $x_\ell$ and ${\bar
x}_\ell$, $x_\ell$ is selected to receive the FL model. Then the server can
send additional flow (up to $\frac{n}{n+1}$) via the literal node of
${\bar x}_\ell$ to some clause clients.  By the construction, these
correspond to the 3SAT clauses that contain $x_\ell$ as a literal. Since
there are $n$ clauses, all these clause clients can receive a flow of
$\frac{1}{n+1}$ as what is required to obtain the entire model in
snapshot one. If ${\bar x}_\ell$ is selected to receive the FL model
instead of $x_\ell$ in snapshot one, then all clause clients
having literal ${\bar x}_\ell$ can receive the entire model as they
are connected with $x_\ell$. If for a clause client,
there are multiple options of literal nodes
for delivering a flow of $\frac{1}{n+1}$, then any of them can be chosen (or the flow
can be arbitrarily split among them). 

From the above, it is now clear that whether or not client $x_\ell$ is
selected for receiving the model corresponds to a true/false value to
Boolean variable $x_\ell$. If there is a selection for all literal pairs, such
that all clause clients can receive the FL model in snapshot one,
there is a direct mapping to a positive answer to the 3SAT 
instance. Conversely, if there is a feasible solution to the 3SAT instances,
it can be easily mapped to a 1-SF-WS solution to satisfy all clients within the
two snapshots.
Hence the decision version of 1-SF-WS is NP-complete, and consequently 1-SF-WS
is NP-hard.
\end{proof}

Via Theorem~\ref{theo:1sfws}, we can immediately conclude the complexity of
2-SF-WS. By simply considering instances of 2-SF-WS where the two client
sets $\CC_1$ and $\CC_2$ do not overlap, and each set has its independent graph
topology as constructed in the proof above, 
it follows as a corollary that 2-SF-WS is NP-hard.

\begin{corollary}
\label{theo:2sfws}
2-SF-WS is NP-hard.
\end{corollary}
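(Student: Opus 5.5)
The plan is to reduce 1-SF-WS to 2-SF-WS, i.e., to show that 1-SF-WS, proven NP-hard in Theorem~\ref{theo:1sfws}, is a special case of 2-SF-WS. For this I take the decision form of the instance built in the proof of Theorem~\ref{theo:1sfws} (two snapshots, $q=1.0$, the literal/clause gadget graph, $2m+n$ unit-weight clients). I then form a 2-SF-WS instance from two disjoint copies of this gadget, $\CG^{(1)}$ and $\CG^{(2)}$. The first model uses client set $\CC_1$ in $\CG^{(1)}$ and the second uses $\CC_2$ in $\CG^{(2)}$, so $\CC_1\cap\CC_2=\emptyset$.

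The section assumes the two models share a server $s$. I therefore identify the two server copies into one node $s$, with no other arcs between the copies. Both models start from $s\S 1$, but a model-1 flow cannot usefully enter $\CG^{(2)}$ because no client of model 1 lies there. To rule out even harmless detours, I will either give the second copy's arcs capacity values tied to model 2 only, or simply note that in an optimal solution any flow of model 1 inside $\CG^{(2)}$ can be removed without increasing the objective. The cleaner option is to set $q_1=q_2=1.0$ and argue by capacity counting. In the gadget, the arcs out of $s$ to the variable square nodes ($1+\frac{1}{n}$) and to the clause nodes ($\frac{n}{n+1}$) are exactly what the first-snapshot argument of Theorem~\ref{theo:1sfws} uses. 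Any flow of the other model entering a copy only consumes capacity there and never helps its own clients.

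Next I will show that the WS objective decomposes. The optimal 2-SF-WS value equals the optimum of copy one plus the optimum of copy two, each being an instance of 1-SF-WS. Consequently, the threshold $2\,(3m+n)$, i.e., each literal pair contributing $3$ and each clause node contributing $1$ in each copy, is attained if and only if the 3SAT instance is satisfiable. This transfers NP-completeness of the decision version, and hence NP-hardness follows.

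The main obstacle is rigorously justifying the decomposition when the two copies share the server node. Cross-copy flow must be shown never to be beneficial, and capacities at the shared node must not create any coupling. If this becomes delicate, I will fall back on a trivial embedding: take $\CC_2$ to be a single client adjacent to $s$ through a dedicated arc of capacity $q_2$. Its contribution to WS is then the constant $w_c\cdot 1$, and the instance reduces exactly to 1-SF-WS.
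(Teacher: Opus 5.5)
Your proposal is correct and follows the paper's own argument, which likewise takes two independent copies of the 3SAT gadget from Theorem~\ref{theo:1sfws}, one per model, on disjoint client sets $\CC_1$ and $\CC_2$, so that the WS objective splits into two 1-SF-WS instances. The coupling you worry about at the shared server does not arise: no arc leads from either copy back to $s$ or into the other copy, so flow conservation forces each model's flow inside the other copy to be zero. Your fallback of a trivial second model on a dedicated arc is also a valid and even more direct embedding.
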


Next, we consider 2-UF-MM. Recall that with the MM objective, 2-UF-MM
amounts to answering the feasibility of routing two FL models to their
(respective) clients such that a single path is used for each client.
We prove this feasibility check is NP-complete and hence 2-UF-MM is
NP-hard, again using a reduction from 3SAT though with a different
construction than earlier.

\begin{theorem}
\label{theo:2ufmm}
2-UF-MM is NP-hard.
\end{theorem}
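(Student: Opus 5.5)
We prove NP-hardness of 2-UF-MM by a reduction from 3SAT to its decision version: given two models sharing server $s$, can every client of $\CC_1$ and $\CC_2$ receive its model on a single path by snapshot $K$? A single snapshot ($K=1$) suffices, since the difficulty comes entirely from the two models competing for shared capacity. We take model sizes $q_1$ and $q_2$ that are not interchangeable, for instance $q_1 = 1$ and $q_2$ a large integer $M$ (e.g.\ $M = 3n+1$). Arc capacities are then chosen so that a unit-size copy of model~1 and a copy of model~2 cannot share certain arcs.

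\textbf{Variable gadget.} For each variable $x_\ell$, place one model-2 client $t_\ell$. It is reachable from $s$ only through one of two parallel routes, a ``true'' route through a literal node $x_\ell$ and a ``false'' route through a node $\bar x_\ell$. Each route contains a bottleneck arc of capacity $M + n$, or some similarly tuned value. Because the flow is unsplittable, the whole model~2 (size $M$) must take exactly one route. This choice encodes the truth assignment: routing model~2 through $\bar x_\ell$ means $x_\ell$ is true, and routing through $x_\ell$ means it is false. The choice leaves residual capacity of only about $n < M$ on the used bottleneck and the full $M+n$ on the other.

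\textbf{Clause gadget.} For each clause $e_h$, place one model-1 client. It is connected from $s$ only via the bottlenecks of the three literal routes corresponding to its literals, with arcs of capacity 1 from those routes to the clause node. We must tune the numbers so that the residual capacity on a bottleneck used by model~2 is $0$ rather than $n$. To do this, set the bottleneck capacity to exactly $M$ and feed model-1 flow onto the literal route through a separate arc that merges before the bottleneck. A clause client then gets its unit of model~1 if and only if at least one of its literals lies on a route not carrying model~2, i.e.\ one of its literals is true. The free bottleneck has capacity $M \ge n$, so it can serve every clause that uses it.

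\textbf{Correctness.} A satisfying assignment gives feasible routing directly. Conversely, in any feasible unsplittable routing, each $t_\ell$ picks one route, which defines an assignment. Each clause client's path must pass through an unused literal bottleneck, so its clause is satisfied. The MM optimum is $1$ if and only if the formula is satisfiable, so deciding 2-UF-MM is NP-complete and the optimization problem is NP-hard.

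\textbf{Main obstacle.} The hard step is the capacity calibration. We must exclude ``cheating'' routings: model-1 flows sneaking to a clause through a route also carrying model~2, model-1 flows reaching $t_\ell$-side arcs, or paths crossing between gadgets. I would first make every literal route a simple chain with a single bottleneck arc of capacity exactly $M$. Model-1 entry and exit arcs would attach only to that chain, so that any path to a clause must traverse one bottleneck. Capacities are integral with $M > 3n$, which makes the mixing of model~1 and model~2 on the same bottleneck impossible, since $M + 1 > M$.
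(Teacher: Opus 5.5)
Your final construction (after you replace the $M+n$ calibration with a bottleneck of capacity exactly $M$) is correct and is essentially the paper's proof. Both use a single snapshot with one big-model client per variable that must take one of two literal paths, where the bottleneck capacity equals the big-model size, so the chosen path is fully blocked for the unit-size model. Unit-size clause clients are fed by capacity-one arcs leaving the literal nodes after the bottleneck. The assignment is read off as the literal not used by the big model. The paper simply sets the big-model size to $n$, with bottleneck arc $s \to x_\ell$ of capacity $n$, so your condition $M>3n$ is unnecessary and any $M \ge n$ works.
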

\begin{proof}
We construct a special case of 2-UF-MM such that whether or not
routing of two FL models can be completed within a single snapshot
gives the correct answer to a generic 3SAT instance.  An illustration
of the snapshot's topology is given in Figure~\ref{fig:2ufmm}.

\begin{figure}
\centering
  \includegraphics[width=0.35\textwidth]{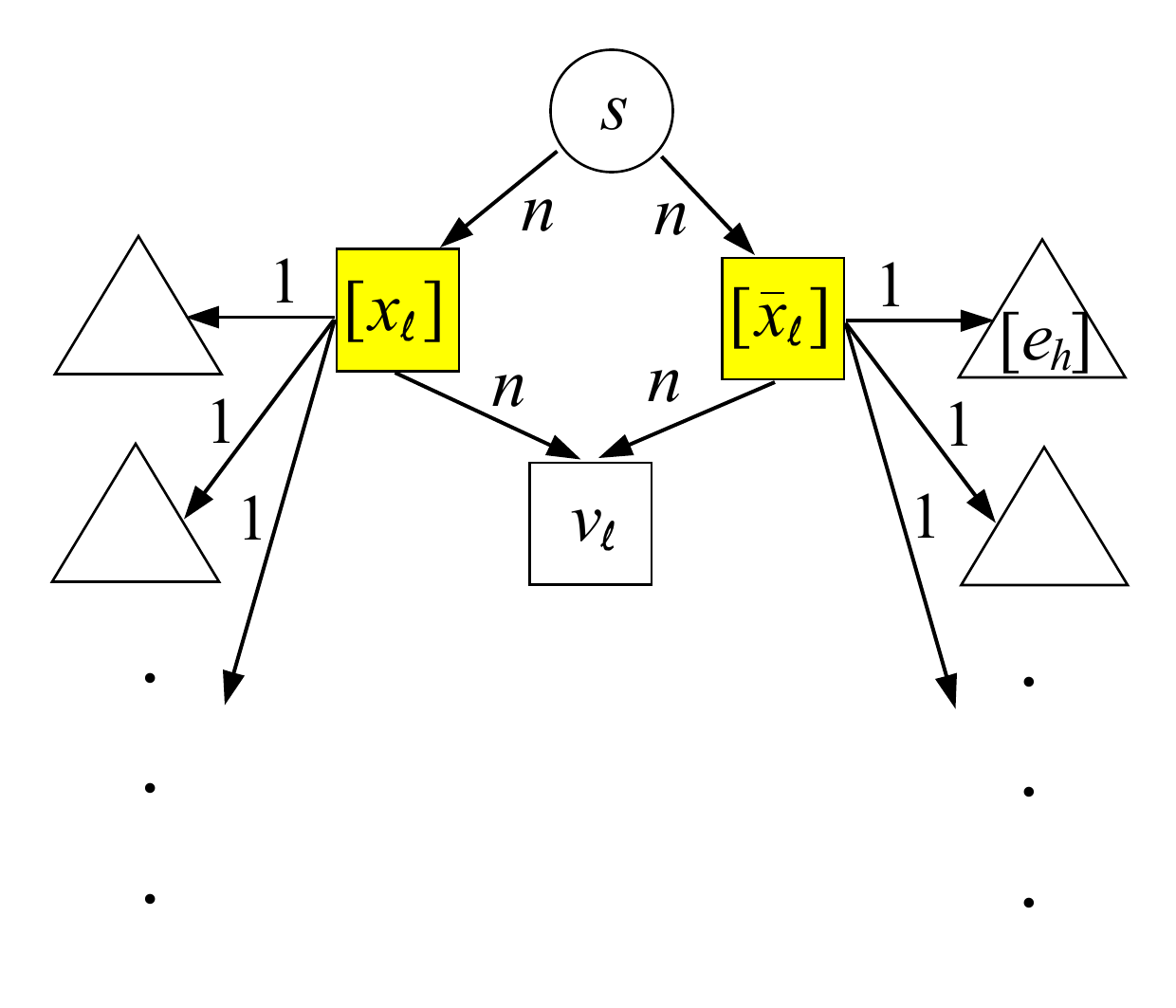} \caption{An
  illustration of reduction from 3SAT to 2-UF-MM. ($[\cdot]$ inside a
  node indicates the corresponding literal and its occurrences in the
  clauses in 3SAT. The numbers beside the arcs are the capacity
  values.)}  \label{fig:2ufmm}
\end{figure}

Suppose the sizes of the two models are $n$ and $1$, respectively.  We
refer to them as the big and small models, respectively, to simplify
the wording. For each Boolean variable $x_\ell$, we introduce two nodes
as shown in the figure. For $x_\ell$, there is an arc from the server $s$
and then an arc from $x_{\ell}$ to a client $v_\ell$ of the big model, both of
capacity $n$.  The same construction applies to ${\bar x }_\ell$.  This
topology is constructed for every pair of literals in 3SAT.  There are
thus $m$ clients of the big model.

In addition, the graph has $n$ nodes (of triangle shape in the
illustration) corresponding to the clauses in 3SAT. These are client
nodes of the small model.  For each clause node, there is an arc  to it from the node representing each literal contained in the clause, with capacity 1.0. 

For each client $v_\ell$ of the big model, there are two paths via the
two literal nodes $x_\ell$ and ${\bar x}_\ell$, respectively.
Choosing the path of $x_\ell$ corresponds to setting value true to
Boolean variable $\bar x_\ell$, namely the negation of $x_\ell$, in 3SAT. In this case, there is no spare
capacity to route any flow to the clause nodes connected to $x_\ell$.
But the server can route the small model to all the clause clients
connected to ${\bar x}_\ell$. At this point, it is clear that the routing for this 2-UF-MM
is feasible within one snapshot if and only if the answer to the 3SAT instance is true,
and the theorem follows.
\end{proof}

In light of Theorem~\ref{theo:2ufmm}, it is rather intuitive that
changing the objective to WS should not alter the problem complexity
since optimizing WS cannot be easier than feasibility check.  Indeed, a
formal proof is straightforward based on the proof of
Theorem~\ref{theo:2ufmm}, and therefore we state the result as a
corollary.

\begin{corollary}
2-UF-WS is NP-hard.
\end{corollary}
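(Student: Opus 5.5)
We reduce from the feasibility question used in the proof of Theorem~\ref{theo:2ufmm}. The aim is to show that a polynomial-time algorithm for 2-UF-WS would decide, for the single-snapshot instance built there, whether both models can be routed to all clients with unsplittable flow. That question is NP-complete by Theorem~\ref{theo:2ufmm}. We take the decision version of 2-UF-WS to be: is there a routing with WS value at most a threshold $B$?

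\textbf{Construction.} Start from the graph of Theorem~\ref{theo:2ufmm}: literal nodes, $m$ big-model clients $v_\ell$, $n$ clause clients of the small model, and the same capacities and model sizes $n$ and $1$. Make it the first snapshot of a two-snapshot TVG, and give every client unit weight.

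In the second snapshot we want every client to be reachable, so that each client has a finite $k_c \in \{1,2\}$ and WS is well defined. We add direct server-to-client arcs in snapshot~2 with ample capacity: capacity $n$ to each $v_\ell$ and capacity $1$ to each clause node. A client not served in snapshot~1 can then always be served in snapshot~2 on a single path. The inter-snapshot buffering arcs have large capacity by assumption, and they do not help a client finish in snapshot~1. Hence every routing of snapshot~1 that serves some subset of clients extends to a feasible two-snapshot routing.

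\textbf{Equivalence.} The WS value equals $(m+n)+N_2$, where $N_2$ is the number of clients whose model arrives in snapshot~2. Set $B = m+n$. A routing with value at most $B$ exists if and only if all $m+n$ clients are served within snapshot~1 by unsplittable paths. By the argument of Theorem~\ref{theo:2ufmm}, that holds if and only if the 3SAT instance is satisfiable. So the decision version of 2-UF-WS is NP-complete, and 2-UF-WS is NP-hard.

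\textbf{Main obstacle.} The step needing care is that the second-snapshot arcs must not change the snapshot-1 analysis. Data in snapshot~2 only reaches clients at $k_c = 2$, and a snapshot-1 arrival uses only snapshot-1 arcs. Because these are the very arcs of the Theorem~\ref{theo:2ufmm} construction, the snapshot-1 feasibility argument carries over unchanged.

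Alternatively, one could skip the second snapshot and allow an infinite or penalty completion time for unserved clients. The explicit two-snapshot fallback avoids ambiguity in the objective.
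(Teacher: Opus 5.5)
Your proposal is correct and follows the same route as the paper, which only remarks that optimizing WS is no easier than the feasibility check and that a formal proof follows directly from the 3SAT reduction for 2-UF-MM. You make that remark rigorous by adding a second snapshot with dedicated server-to-client fallback arcs, which guarantees $k_c\in\{1,2\}$ for every client, and by using the threshold $B=m+n$.
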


Now we have shown the complexity of all problem variants in
Table~\ref{tab:downloading}.  However, for the settings with two
models, it has been assumed they share a common server.
What if separate servers are used? The answer is that the complexity results remain.
More specifically, using the arguments for concluding
Corollary~\ref{theo:2sfmm} and Corollary~\ref{theo:2sfws},
it is easy to see that 2-SF-MM remains tractable
and 2-SF-WS remains NP-hard with two separate servers.
For 2-UF-MM, an NP-hardness proof can be derived by 
adding two server nodes $s_1$ and $s_2$, and two arcs to connect to the original server node $s$ (now acting as a regular satellite) in the topology 
in Figure~\ref{fig:2ufmm},
with capacity $mn$ and $n$, respectively, for the big and small models.
Then the line of proof for Theorem~\ref{theo:2ufmm} remains valid.
Finally, for 2-UF-WS the hardness result follows easily using a similar proof.
We summarize the observations below.

\begin{theorem}
The complexity results of 2-SF-MM, 2-SF-WS, 2-UF-MM, and 2-UF-WS remain
for two separate servers.
\end{theorem}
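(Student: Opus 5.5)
I treat the four variants one by one. In each case the common-server result either generalizes to two servers or already contains the two-server case as a special instance.

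\textbf{2-SF-MM (tractability).} I will reuse the feasibility construction behind Corollary~\ref{theo:2sfmm}, now as a two-commodity flow. Commodity~1 has source $s_1\S 1$ with supply $q_1|\CC_1|$, and commodity~2 has source $s_2\S 1$ with supply $q_2|\CC_2|$. Each client $c$ gets a sink $a_c$ with demand $q_1$ for commodity~1 if $c\in\CC_1$ and $q_2$ for commodity~2 if $c\in\CC_2$. For a trial snapshot $k$ the graph is restricted to snapshots $1,\dots,k$. All arcs carry the joint capacity constraint, and the inter-snapshot arcs get capacity $q_1|\CC_1|+q_2|\CC_2|$. Because flow is splittable, feasibility is a multicommodity flow LP and is solvable in polynomial time. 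Bisection over $k$ then gives the optimum.

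The delicate point is that with two distinct sources a single-commodity aggregation is no longer valid: a flow from $s_1$ could otherwise be counted toward a client of model~2. This is why the multicommodity LP is needed. I expect this to be the main obstacle, since it must be stated carefully that polynomial-time LP solvability holds for a fixed number of commodities.

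\textbf{2-SF-WS (hardness).} I follow Corollary~\ref{theo:2sfws}. Take two disjoint copies of the gadget from the proof of Theorem~\ref{theo:1sfws}, one rooted at $s_1$ and one at $s_2$, with $\CC_1$ and $\CC_2$ disjoint. With no shared arcs, the instance splits into two independent 1-SF-WS instances. One copy can encode a generic 3SAT instance and the other can be trivial. Hardness follows.

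\textbf{2-UF-MM (hardness).} I take the topology of Theorem~\ref{theo:2ufmm} and add servers $s_1$ (big model) and $s_2$ (small model), with arcs $(s_1,s)$ of capacity $mn$ and $(s_2,s)$ of capacity $n$. Node $s$ becomes a relay. I then check three things:
\begin{itemize}
\item The capacity $mn$ is exactly enough for the $m$ big-model clients, each needing $n$ along a single path.
\item The capacity $n$ is exactly enough for the $n$ clause clients, each needing $1$.
\item Every path from $s_1$ or $s_2$ to a client passes through $s$.
\end{itemize}
Together these show that any feasible routing in the new instance restricts to a feasible routing in the old one, and conversely. The literal-choice argument from that proof then carries over unchanged, so a feasible one-snapshot routing exists if and only if the 3SAT instance is satisfiable.

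\textbf{2-UF-WS (hardness).} I reuse the 2-UF-MM construction with unit weights and $K=1$. One option is to ask whether the optimal WS value equals $|\CC_1|+|\CC_2|$. If single-snapshot infeasibility would make the objective undefined, I instead append a second snapshot with ample direct capacity from the servers to all clients. Then the optimal WS value equals $|\CC_1|+|\CC_2|$ if and only if every client is served in snapshot~1, which holds if and only if the 3SAT instance is satisfiable.
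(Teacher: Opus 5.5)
Your proposal is correct. For 2-SF-WS, 2-UF-MM and 2-UF-WS it follows the paper:
\begin{itemize}
\item for 2-SF-WS, the same disjoint-gadget argument;
\item for 2-UF-MM, the same attachment of $s_1$ and $s_2$ to the old server $s$ by arcs of capacity $mn$ and $n$;
\item for 2-UF-WS, an explicit argument where the paper only says the result ``follows easily using a similar proof.'' Your padding snapshot with ample server-to-client capacity, together with the threshold $|\CC_1|+|\CC_2|$, makes that step precise.
\end{itemize}

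The genuine difference is 2-SF-MM. The paper simply reuses the argument of Corollary~\ref{theo:2sfmm}, i.e., a single-commodity feasibility LP with aggregated supply. You instead set up a two-commodity LP with joint arc capacities. Your version is the one that actually works once the sources differ. A single-commodity flow with supplies at $s_1\S 1$ and $s_2\S 1$ (or at a super-source feeding both) may deliver $s_1$'s data to a model-2 client, so its feasibility does not imply a valid routing. For example, take $q_1=q_2$, with $s_1$ reaching only a model-2 client and $s_2$ reaching only a model-1 client. So your route costs a slightly larger LP but closes a point the paper's one-line justification glosses over.

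One correction to your stated worry: the number of commodities need not be fixed. The fractional multicommodity feasibility LP has $O(r\,|\CA|)$ variables for $r$ commodities, so it is polynomial even when $r$ is part of the input. This is also what supports the paper's remark about more than two models.
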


The NP-hard problem variants for two FL models clearly
remain NP-hard if there are more than two FL models. 
For 2-SF-MM that is polynomial-time solvable, generalizing
this complexity result to three or more models 
consists in the underlying rationale for Corollary~\ref{theo:2sfmm}.

\section{Global Model Distribution: Multicast}
\label{sec:multicast}

Since all clients of an FL model are receiving the same model
parameters in the downloading phase, deploying multicast routing
becomes attractive to save bandwidth. With multicast, only one copy
of the global model is routed. Generally speaking, multicast routing with unsplittable flow uses a tree structure rooted at
the source, where each node distributes the same data to its children
in the tree.  Constructing a multicast tree for routing was studied
very early~\cite{12889}. The basic setting leads to constructing a Steiner tree
(or Steiner arborescence for directed graphs) where the so called
terminal nodes correspond to the clients in our context. Steiner tree
is a classic NP-hard problem~\cite{WiHwRi92}. We will see, however,
this result does not generally imply NP-hardness for multicast routing
for FL model downloading. Multicast with splittable flow, however, will not be discussed here. Although it is technically feasible in SR, this design involves increased complexity in determining the number of trees and generating multiple trees where minimal path overlap is sought for effective load balancing, thus decreasing its value of practicality for in-orbit FL.\par

For multicast, we consider problem variants that differ in the
objective function (WS or MM) and the number of FL models.  We use
prefix ``mul'' to indicate multicast routing. The four problem settings
are mul-1-WS, mul-1-MM, mul-2-WS, and mul-2-MM. Their computational complexity is summarized in Table \ref{tab:multi-UF}.

\subsection{Multicast Routing for one FL model}
\begin{table}[!t]
\caption{Problem variants and their computational complexity for multicast in global model distribution. (UF = unsplittable flow, WS = weighted sum, MM = min-max, P = polynomial time.)}
\label{tab:multi-UF}
\centering
\begin{tabular}{c|cc}
\toprule
\multirow{2}{*}{\textbf{Models}} & \multicolumn{2}{c}{\textbf{UF}} \\ \cmidrule(lr){2-3}
 & \textbf{WS} & \textbf{MM} \\ \midrule
\textbf{1} & \begin{tabular}[c]{@{}c@{}}Mul-1-WS\\ \emph{(P)}\end{tabular} & \begin{tabular}[c]{@{}c@{}}Mul-1-MM\\ \emph{(P)}\end{tabular} \\
\textbf{2} & \begin{tabular}[c]{@{}c@{}}Mul-2-WS\\ \emph{(NP-hard)} \end{tabular} & \begin{tabular}[c]{@{}c@{}}Mul-2-MM\\ \emph{(NP-hard)}\end{tabular} \\ \hline
\end{tabular}
\end{table}

For one FL model, we can compare the arc capacities with
respect to the model size, such that an arc is present, if and only if
its capacity is no less than the model size.  Hence the capacity is
represented by graph connectivity.

We start with mul-1-MM, as concluding its complexity is 
easy. In mul-1-MM, the task is to answer the existence of a Steiner
arborescence 
rooted at the server $s$ (in the first snapshot, represented by $s \S 1$), spanning all clients in $\CC$
and possibly including some non-client nodes (the so called Steiner
nodes), over graph $\CG$ defined until a given snapshot $k$.
Obviously, such an arborescence exists, if an only if there is some
connectivity from $s$ to each client node $c$ in snapshot $k$, that is
node $c \S k$. Therefore, mul-1-MM amounts to simple graph search.


\begin{theorem}
\label{theo:mul1mm}
Mul-1-MM is polynomial-time solvable.
\end{theorem}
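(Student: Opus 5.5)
Here the plan is to turn mul-1-MM into a sequence of reachability questions on a pruned version of $\CG$. First, since there is a single model of size $q$ and multicast sends exactly one copy along each arc of the tree, I will delete every intra-snapshot arc $(u,v)$ with $p_{uv}<q$ and keep all others. A kept arc can carry the one copy, and a deleted arc cannot carry it unsplit. Inter-snapshot arcs are kept unconditionally, because their capacity $q|\CC|\ge q$. Call the pruned graph $\CG'$ and let $\CG'_k$ be its restriction to snapshots $1,\dots,k$. Capacity constraints then vanish for multicast: each arc of a Steiner arborescence carries exactly one copy of the model.

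The key claim is this: routing to all clients can be completed by snapshot $k$ if and only if, for every $c\in\CC$, node $c\S k$ is reachable from $s\S 1$ in $\CG'_k$. For necessity, any multicast tree delivering the model to $c$ by snapshot $k'\le k$ gives a directed path from $s\S 1$ to $c\S k'$ in $\CG'_k$. Following the inter-snapshot (caching) arcs of $c$ extends this path to $c\S k$. For sufficiency, take a breadth-first search tree of $\CG'_k$ rooted at $s\S 1$. It is an arborescence containing every reachable node, so in particular every $c\S k$. Pruning the branches that contain no client node gives a Steiner arborescence spanning the clients. I will note explicitly that this arborescence is a valid multicast routing: replication is allowed at any node, each arc carries one copy, and we already checked that each such arc has capacity at least $q$. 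A client may appear in the tree as several nodes $c\S k'$; this is harmless, since $k_c$ is simply the earliest such $k'$.

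The algorithm then tests $k=1,\dots,K$ in order, or by bisection since feasibility is monotone in $k$ thanks to the caching arcs. It returns the smallest feasible $k$, which is the MM optimum; if $k=K$ fails, the instance is infeasible. Each test is one graph search in $O(|\CV|+|\CA|)$ time, and $|\CV|=IK$, so the total running time is polynomial. The main point needing care, rather than a real obstacle, is justifying that multicast routing is exactly captured by connectivity in $\CG'$. The risk is an arc shared by several tree branches, which would require a capacity above $q$. That cannot happen in an arborescence, since every arc carries exactly one copy.
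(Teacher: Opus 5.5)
Your proposal is correct and follows essentially the same route as the paper. Both drop every arc whose capacity is below the model size, so that capacity becomes pure connectivity, and then decide feasibility for a trial snapshot $k$ by checking that every $c \S k$ is reachable from $s \S 1$, searching over $k$. Your explicit necessity/sufficiency argument, the BFS-tree construction, and the monotonicity remark simply spell out details the paper treats as obvious.
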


We now turn our attention to mul-1-WS, where the cost of a
client is associated with the snapshot in which the client receives
the model. Problem mul-1-WS does resemble the classic
Steiner arborescence problem. However, somewhat surprisingly, we will
prove that mul-1-WS is solvable in polynomial time.

For the proof, consider the expanded topology for
modeling inter-snapshot arcs of each client, as shown earlier in
Figure~\ref{fig:mcf}. Since we deploy multicast, the
interpretation of the arc costs in the figure is no longer unit flow
costs. Instead, the cost of an arc is the (one-time) cost of including
the arc in the arborescence. Note that for any client $c$, only arcs
entering $a_c$ carry positive costs, and this property will be utilized
in the proof of tractability.

For the Steiner arborescence to be used for multicast routing, node
$a_c$ of each client $c$ is a terminal node (that is a node that must
be included in the arborescence). It is clear that $a_c$ has to be
reached via arc $(c \S k, a_c)$ for some $k, 1 \leq k \leq K$, with
cost $k w_c$, and there is no other cost incurred.  Therefore, the
total cost of the Steiner arborescence indeed is what is intended for
mul-1-WS.


Before proceeding, we assume that $s$ can reach all terminal
nodes and hence there exists an arborescence rooted at node $s \S 1$ spanning all
$a_c, c \in \CC$, as otherwise there is nothing to optimize.
(Checking if the assumption holds is straightforward as it amounts to
simple graph search.)  Let $\CT^M = \{a_c, c \in \CC\}$ denote the set
of terminal nodes.  For the Steiner nodes, i.e., those in $\CV
\setminus (\CT^M \cup\{s\})$, some may not be reachable at all from server $s$.
Obviously, such nodes are fully redundant, and hence we
discard them from the graph.  We use notation $\CG^M = (\CV^M, \CA^M)$
for the graph obtained from the original graph $\CG$ with the topology
modification as illustrated in Figure~\ref{fig:mcf} and the removal
of non-reachable Steiner nodes.

\begin{lemma}
\label{theo:sarboresence}
For graph $\CG^M = (\CV^M, \CA^M)$, denote by $\CH^*$ the minimum-cost
Steiner arborescence rooted at $s \S 1$ spanning all terminals in $\CT^M$.
A node is a leaf in $\CH^*$ if and only if the node is in $\CT^M$.
\end{lemma}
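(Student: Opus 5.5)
The plan is to prove the two directions separately. Both rest on two structural facts about $\CG^M$ that follow from the construction in Figure~\ref{fig:mcf}. First, each terminal $a_c$ has only incoming arcs, namely $(c \S k, a_c)$ for $k=1,\dots,K$, and no outgoing arcs. Second, the only arcs with positive cost are those entering some $a_c$; every other arc, whether intra-snapshot, inter-snapshot, or from $s\S 1$ onward, has cost zero.

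\emph{Terminals are leaves.} Since $\CH^*$ must span $\CT^M$, every $a_c$ lies in $\CH^*$. Because $a_c$ has out-degree zero in $\CG^M$, it has out-degree zero in any subgraph, so it is a leaf of $\CH^*$. The root $s \S 1$ is not a terminal. It cannot be a leaf, since $\CT^M \neq \emptyset$ and every $a_c$ must be reached from it.

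\emph{Leaves are terminals.} Suppose $\CH^*$ has a leaf $v \notin \CT^M$, so $v$ is a Steiner node. Its unique entering arc $(u,v)$ in $\CH^*$ does not enter any $a_c$, so it has cost zero. Removing $v$ and $(u,v)$ leaves an arborescence rooted at $s\S 1$. It still spans $\CT^M$, because no terminal path passed through the leaf $v$, and its cost is unchanged. This is the main obstacle: since the removed arc has zero cost, minimality of cost alone does not exclude such leaves, and a min-cost arborescence with a dangling zero-cost Steiner branch is still min-cost. I will resolve this by fixing the convention that $\CH^*$ denotes a minimum-cost Steiner arborescence with the fewest arcs; equivalently, any min-cost arborescence is pruned. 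Under this convention the removal above yields an arborescence of equal cost and strictly fewer arcs, a contradiction. I will also note that the pruning is constructive. Repeatedly deleting non-terminal leaves terminates after at most $|\CV^M|$ steps without changing the cost, so such an $\CH^*$ always exists and the convention loses no generality.

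If positive weights $w_c>0$ are assumed, I will also remark that no such ambiguity arises on the terminal side. Each $a_c$ is entered by exactly one arc $(c\S k, a_c)$ in $\CH^*$, since an arborescence gives every non-root node in-degree one. Thus the cost of $\CH^*$ is exactly $\sum_{c} k_c w_c$, as intended for mul-1-WS. This is the property that the subsequent tractability argument will exploit.
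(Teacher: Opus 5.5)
Your proposal is correct and more complete than the paper's proof. Your ``terminals are leaves'' direction is the whole of the paper's one-line argument: $\CH^*$ spans $\CT^M$, and no arc leaves any $a_c$.

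The paper gives no argument for the converse. You correctly see why one is needed: every arc not entering $\CT^M$ has cost zero, so minimality of cost alone cannot exclude Steiner leaves. In fact the ``only if'' part is false for an arbitrary minimizer. Append to $\CH^*$ a single zero-cost arc from $\CV_{\CH^*}$ to a reachable node outside it; this is exactly the step the paper performs in the proof of the subsequent theorem. The result is another minimum-cost Steiner arborescence with a non-terminal leaf.

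Your convention fixes this: choose a minimizer with the fewest arcs, or equivalently prune non-terminal leaves. Pruning preserves cost, so the convention loses no generality, and your leaf-removal argument then completes the proof.

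The paper's shorter route gets away with the omission because nothing downstream uses the converse. The tractability theorem only needs two facts, both of which hold for every minimizer by construction:
\begin{itemize}
\item all positive costs lie on arcs entering $\CT^M$;
\item terminals have no outgoing arcs.
\end{itemize}
Your version buys a lemma that is true as stated.
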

\begin{proof}
The lemma follows immediately from that $\CH^*$ spans all nodes in
$\CT^M$ and there is no outgoing arc from any node in $\CT^M$ by graph
construction.
\end{proof}

\begin{theorem}
Mul-1-WS is polynomial-time solvable by computing a minimum-cost
arborescence on $\CG^M$.
\end{theorem}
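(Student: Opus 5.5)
The idea is to show that on $\CG^M$ the minimum-cost Steiner arborescence problem coincides with the minimum-cost spanning arborescence problem. The latter is solvable in polynomial time by the Chu--Liu/Edmonds algorithm. The reduction rests on two features of the construction. First, every node of $\CV^M$ is reachable from $s \S 1$. Second, only arcs entering terminal nodes $a_c$ carry positive cost; every other arc costs zero.

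\emph{Step 1: Steiner arborescence to spanning arborescence.} Let $\CH^*$ be a minimum-cost Steiner arborescence rooted at $s \S 1$ spanning $\CT^M$. Since every Steiner node in $\CV^M$ is reachable from $s \S 1$, I extend $\CH^*$ to a spanning arborescence of $\CG^M$. For each node $v$ not yet in the tree, I take a path from $s \S 1$ to $v$ in $\CG^M$ and add its first arc leaving the current tree. I repeat until all nodes are covered.

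It must be checked that the added arcs have zero cost, i.e.\ that they never enter a node $a_c$. Every $a_c$ already lies in $\CH^*$, so no added arc enters it. Every other node is a non-terminal, and arcs entering non-terminals have cost zero. Hence the extension is a spanning arborescence with the same cost as $\CH^*$, so the optimal spanning arborescence costs at most the optimal Steiner arborescence.

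\emph{Step 2: spanning arborescence to Steiner arborescence.} Any spanning arborescence already spans $\CT^M$ and is therefore a feasible Steiner arborescence. So the optimal Steiner arborescence costs at most the optimal spanning arborescence.

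From a minimum spanning arborescence I recover a Steiner arborescence of equal cost by repeatedly pruning leaves that are not in $\CT^M$. Pruning removes only arcs and never increases cost, since all costs are nonnegative. By Lemma~\ref{theo:sarboresence}, the pruned tree has exactly the structure of a minimum Steiner arborescence. Combining both directions, the two optima have equal cost.

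\emph{Step 3: correspondence with mul-1-WS.} As argued before the lemma, each terminal $a_c$ has in-degree one in the arborescence. It is entered via exactly one arc $(c\S k, a_c)$, contributing cost $k w_c$. Thus the cost of the arborescence is the WS objective, where $k_c$ is the smallest such $k$ on the tree path.

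One subtlety needs checking: an optimal tree should not reach $c\S k$ at some snapshot yet enter $a_c$ from a later snapshot. This cannot happen at optimum. If the tree reaches $c\S k$, the inter-snapshot arc $(c\S k, c\S (k+1))$ is a zero-cost arc into a non-terminal. Rerouting the parent arc of $a_c$ to the earliest reached copy of $c$ therefore only lowers cost, so the optimum gives the true arrival snapshot.

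Conversely, any multicast routing of the model defines an arborescence in $\CG^M$ that reaches each $c\S k_c$. Adding the arcs $(c\S k_c, a_c)$ gives a Steiner arborescence of cost $\sum_c w_c k_c$. So the optima coincide, and Chu--Liu/Edmonds on $\CG^M$, whose size is polynomial in $I$, $K$ and $|\CC|$, solves mul-1-WS.

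The main obstacle is Step 1, specifically the zero-cost extension. I will argue this carefully using the reachability pruning, which guarantees that every node can be attached, and the fact that all terminals are already covered, so no positive-cost arc is ever needed.
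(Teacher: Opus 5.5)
Your proposal is correct and follows essentially the paper's proof. You extend an optimal Steiner arborescence to a spanning one using only zero-cost arcs into non-terminal nodes, which is possible because every terminal is already included and every node of $\CG^M$ is reachable from $s \S 1$; you note that any spanning arborescence is Steiner-feasible; and you invoke Chu--Liu/Edmonds, exactly as the paper does. Your Step~3, re-attaching $a_c$ to the earliest copy of $c$ in the tree, usefully makes explicit the cost--objective correspondence that the paper only asserts before Lemma~\ref{theo:sarboresence}.
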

\begin{proof}
Consider $\CH^*$ that is an optimal Steiner arborescence.  By
Lemma~\ref{theo:sarboresence}, all costs occur on the incoming arcs to
the leave nodes in $\CT^M$.  Denote the subset of nodes in $\CH^*$ by
$\CV_{\CH^*}$.  Because all nodes are reachable from $s \S 1$ in
$\CG^M$, there must exist at least one arc connecting some node in
$\CV_{\CH^*}$ to some node in $\CV^M \setminus \CV_{\CH^*}$. Note that
both ends of the arc are non-terminal nodes (i.e., either the server
or Steiner nodes), since all terminals in $\CT^M$ are in $\CV_{\CH^*}$
and there is no outgoing arc from any node in $\CT^M$. Consider adding
any of such arcs and its end node to $\CH^*$. The tree structure is
clearly maintained and no additional cost is incurred. This expansion
can be repeated with zero increase in cost, until all nodes are
included in $\CH^*$. At this stage, we obtain an arborescence spanning
all nodes.  Since a minimum-cost arborescence has a total cost that
cannot be lower than that of minimum-cost Steiner arborescence, and
the former is obtained in polynomial time with the Chu-Liu/Edmonds'
algorithm~\cite{Edmonds}, the theorem follows.
\end{proof}

\subsection{Multicast Routing for Two FL Models}

Consider mul-2-MM where the two FL models have the same server.  Here
we ask for two arborescences, both rooted at node $s \S 1$, such that
they span the client nodes of the two models, respectively, in the
graph topology defined for up to the snapshot in question.  An arc may
be present in both arborescences, if its capacity can accommodate both
models.  For one model, mul-1-MM is easy since it simply reduces to
graph connectivity, giving Theorem~\ref{theo:mul1mm}. For two models,
we show that answering the question of feasibility becomes
NP-complete, and hence mul-2-MM is NP-hard. 

\begin{theorem}
Mul-2-MM is NP-hard.
\end{theorem}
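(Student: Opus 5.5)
We prove NP-hardness of mul-2-MM by a reduction from 3SAT. The structure mirrors the proof of Theorem~\ref{theo:2ufmm}, using a single snapshot and a common server $s$. The question posed is whether two arborescences rooted at $s$, one for each model, can span their respective clients within this snapshot while respecting arc capacities. Since multicast sends only one copy of each model per arc, the capacity coupling has to come from arcs that cannot accommodate both models at once. Arc capacities therefore encode three kinds of arcs: those usable by model~1 only, by model~2 only, or by both. We take model sizes $q_1 = q_2 = 1$ and use capacity $1$ for ``shared'' arcs, so that any such arc carries at most one of the two models, and capacity $2$ where both may pass.

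\textbf{Gadgets.} For each variable $x_\ell$ we create two literal nodes $x_\ell$ and $\bar x_\ell$, each reached from $s$ by an arc of capacity $1$. We add a model-1 client $v_\ell$ with arcs $(x_\ell, v_\ell)$ and $(\bar x_\ell, v_\ell)$ of capacity $1$, so that model~1 must enter at least one of the two literal nodes. Each clause $e_h$ becomes a model-2 client, with capacity-$1$ arcs from the literal nodes whose literal appears in $e_h$. To prevent model~2 from entering \emph{both} literal nodes, we add a second model-1 client $\bar v_\ell$ for each $\ell$. There is a subtlety here, because multicast lets model~1 enter both $x_\ell$ and $\bar x_\ell$ for free. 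The fix is to make model~1 unable to enter both: we insert a private intermediate node $y_\ell$ with an arc $(s,y_\ell)$ of capacity $1$ and arcs $(y_\ell,x_\ell)$, $(y_\ell,\bar x_\ell)$ of capacity $1$, and let $x_\ell, \bar x_\ell$ be reached from $s$ \emph{only} via $y_\ell$. Then model~2 also needs a path through $y_\ell$. I would instead give model~2 its own capacity-$1$ arcs $(s,x_\ell)$ and $(s,\bar x_\ell)$ that model~1 cannot use because of a size gap, for instance $q_1 = 2$, $q_2 = 1$, with these arcs of capacity $1$. Arcs on model-1 paths then get capacity $2$, meaning model~1 only, or capacity $3$, meaning both. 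Model~1 reaches $x_\ell$ or $\bar x_\ell$ via capacity-$2$ arcs from $s$; once it uses one of these literal nodes, that node's outgoing capacity-$2$ or capacity-$3$ arcs decide what model~2 may use. Concretely, we make the arcs literal$\to$clause of capacity $1$ and route model~1 to $v_\ell$ through a literal node's single outgoing arc of capacity $2$ toward a hub that also feeds the clause arcs, so that a literal used by model~1 has its capacity consumed.

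\textbf{Correctness and the main obstacle.} The delicate point is exactly this blocking mechanism. Since multicast allows replication, a node that holds a model can feed all its outgoing arcs, so ``using up'' capacity has to happen on a single arc that both the model-1 path and all clause arcs of that literal must traverse. The plan is therefore to give each literal $\lambda$ a bottleneck arc $(\lambda, \lambda')$ of capacity $2$. Everything downstream of $\lambda'$ consists of the clause arcs plus one arc to $v_\ell$, and the nodes $x_\ell'$ and $\bar x_\ell'$ both feed $v_\ell$. Model~1 must cross one bottleneck, and a bottleneck crossed by model~1 cannot also carry model~2 (since $2+1>2$). Model~2 can cross the other bottleneck freely, and it reaches clauses satisfied by that literal; we orient the clause arcs so that clauses hang off the complementary literal, as in Theorem~\ref{theo:2ufmm}. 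From here the proof is the two directions. A satisfying assignment yields both arborescences by routing model~1 through the false literal's bottleneck. Conversely, feasible arborescences define an assignment, taking the literal whose bottleneck is crossed by model~2, or either one if neither is, and this assignment satisfies every clause because each clause client receives model~2. The step I expect to require the most care is verifying that no alternative route lets model~2 bypass the bottlenecks; to ensure this, every path from $s$ to a clause node must pass through some $\lambda'$.
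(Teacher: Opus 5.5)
Your final bottleneck construction is correct and is essentially the paper's proof, which simply reuses the 2-UF-MM gadget of Theorem~\ref{theo:2ufmm} (models of size $n$ and $1$). There the arc $(s,\lambda)$ of capacity $n$ is already the single arc that both the big-model path to $v_\ell$ and every clause arc leaving $\lambda$ must traverse, and one copy of the small model entering the unblocked literal node is replicated to all its clause clients. The detours via $\bar v_\ell$, $y_\ell$, the hub, the size gap and the extra nodes $\lambda'$ are therefore unnecessary: your very first gadget with $q_1=q_2=1$ and capacity-$1$ arcs $(s,\lambda)$ already works, since model~1 entering both literal nodes is not free and only hurts model~2. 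You should also state plainly that $e_h$ receives arcs from the nodes of the literals it contains (not their complements), as your assignment rule requires.
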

\begin{proof}
A proof can be established using again the reduction from 3SAT in the
proof of Theorem~\ref{theo:2ufmm}, where a graph is defined for a
single snapshot, see also Figure~\ref{fig:2ufmm}.  For each client
$v_\ell$ of the big model, it is apparent that either the left yellow
node ($x_\ell$) or the right yellow node (${\bar x}_\ell$) must be
chosen. This gives a directed path from $s$ to $v_\ell$. As in the
proof of Theorem~\ref{theo:2ufmm}, a clause client can be connected to
one of its three yellow nodes, unless all are blocked due to the routing
choice made for the big model.  Moreover, sending copies of the small
model via a yellow node to multiple clause clients implies that the
multicast mode can be deployed instead, and vice versa. Thus there is
an equivalence of feasibility of 2-UF-MM and mul-2-MM for the graph
constructed in the proof. In fact, a feasible solution of 2-UF-MM in
the graph is composed by two arborescences.  Hence mul-2-MM is NP-hard.
\end{proof}

As for the case of 1-UF-MM and 1-UF-WS, optimizing the WS objective
function is not easier than determining feasibility with two FL
models, a result that is easily obtained by adapting the proof of
mul-2-MM for mul-2-WS. We thus have the following corollary.

\begin{corollary}
Mul-2-WS is NP-hard.
\end{corollary}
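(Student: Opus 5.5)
We prove the statement by reducing the decision version of mul-2-MM to the decision version of mul-2-WS, reusing the single-snapshot 3SAT construction of Theorem~\ref{theo:2ufmm} (Figure~\ref{fig:2ufmm}) exactly as in the proof of the NP-hardness of mul-2-MM. The key point is that with a single snapshot, $K=1$, every client that receives its model does so in snapshot $1$. Whenever routing is feasible, the WS value is therefore exactly $\sum_{c \in \CC_1 \cup \CC_2} w_c$, for instance $m+n$ with unit weights. The decision question ``is there a multicast routing of both models with WS value at most $m+n$?'' is then the feasibility question already shown to be NP-complete.

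Since an instance with only one snapshot might be viewed as degenerate, and since one could object that infeasible instances have no finite WS value, we would instead use two snapshots. Take the topology of Figure~\ref{fig:2ufmm} as snapshot $1$. Let snapshot $2$ contain direct arcs from $s \S 2$ to every client, each with capacity $n+1$, so that any missing model can always be delivered there. The multicast instance is then always feasible, and every client has $k_c \in \{1,2\}$. Hence the WS optimum equals $m+n$ if and only if all clients receive their models in snapshot $1$. By the proof of mul-2-MM, this happens if and only if the 3SAT instance is satisfiable.

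We also need to check that the snapshot-2 arcs cannot help in snapshot $1$. Arcs in snapshot $2$ only lead forward in time, and the inter-snapshot caching arcs go from snapshot $1$ to snapshot $2$, never backward. Snapshot-1 delivery is therefore governed entirely by the snapshot-1 topology, so the equivalence established for mul-2-MM carries over unchanged.

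The main obstacle is making sure the snapshot-2 fallback arcs keep the instance feasible without altering the snapshot-1 structure or letting the two multicast arborescences interact badly. This should be settled by the arcs' capacity of $n+1$: it is at least $n+1$, so each such arc can carry both the big model (size $n$) and the small model (size $1$) on one arc, which makes both arborescences realizable in snapshot $2$. The reduction is polynomial, and NP-hardness of mul-2-WS follows.
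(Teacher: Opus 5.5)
Your proposal is correct and takes the paper's route. The paper only states that mul-2-WS follows by adapting the single-snapshot 3SAT reduction used for mul-2-MM, because minimizing the WS objective is no easier than deciding feasibility; this is exactly your observation that the WS value attains $\sum_{c \in \CC_1 \cup \CC_2} w_c$ precisely when all clients are served in snapshot~1. Your second snapshot with fallback arcs from $s \S 2$ guarantees a finite optimum, so the threshold $m+n$ exactly encodes snapshot-1 feasibility, a step the paper leaves implicit.
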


We end the section by examining multicast routing of two models
with different servers. As expected, the problem does 
not become easier. In fact the complexity result remains
even if the models are of the same size.

\begin{theorem}
\label{theo:mul2mm}
The separate-server version of mul-2-MM and that of mul-2-WS are
NP-hard, even if the two models are of the same size.
\end{theorem}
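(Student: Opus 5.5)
The plan is to reduce from 3SAT, as in the proofs of Theorem~\ref{theo:2ufmm} and of mul-2-MM. Throughout, both models have size $1$ and every arc has capacity $1$. An arc can then carry one of the two models but never both, so the only interaction between the two multicast arborescences is that they must be arc-disjoint. With separate servers $s_1$ and $s_2$, we can no longer rely on the asymmetric capacities (sizes $n$ and $1$) that created blocking in Figure~\ref{fig:2ufmm}. Instead, blocking will come from a single shared \emph{bottleneck arc} per literal.

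\textbf{Construction (one snapshot).} For each literal $L\in\{x_\ell,\bar x_\ell\}$ I create two nodes, $L$ and $L'$, joined by the bottleneck arc $(L,L')$. I then add arcs $(s_1,L)$ and $(s_2,L)$ for every literal. For each variable $x_\ell$ there is a client $v_\ell$ of model~1, with arcs $(x_\ell',v_\ell)$ and $(\bar x_\ell',v_\ell)$. For each clause $e_h$ there is a client of model~2, with an arc $(L',e_h)$ for every literal $L$ contained in $e_h$. Client nodes have no outgoing arcs, so neither model can use the other model's clients as relays.

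\textbf{Forward direction (assignment $\Rightarrow$ routing).} Given a satisfying assignment, model~1 reaches $v_\ell$ through the bottleneck of the \emph{false} literal of $x_\ell$. Model~2 enters every true literal $L$ from $s_2$ and crosses $(L,L')$. Because we use multicast, one copy on $(L,L')$ serves all clauses containing $L$. Every clause has a true literal, so all clause clients are reached, and the two arborescences are arc-disjoint.

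\textbf{Reverse direction (routing $\Rightarrow$ assignment).} Take feasible arborescences $T_1$ and $T_2$. Each $v_\ell$ is reached in $T_1$ through at least one bottleneck $(x_\ell,x_\ell')$ or $(\bar x_\ell,\bar x_\ell')$; let $F_\ell$ denote one such bottleneck literal. Declare $F_\ell$ false and its complement true. Every clause client must be entered in $T_2$ from some $L'$, and the only arc into $L'$ is $(L,L')$. So $T_2$ uses a bottleneck that $T_1$ does not; in particular $L\neq F_\ell$ for the relevant $\ell$. Hence $L$ is true, and the clause is satisfied.

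The step I expect to be the delicate one is this reverse direction. I must check that $T_1$ might use both bottlenecks of a variable (harmless, since I just pick one), and that no alternative path bypasses a bottleneck. The latter holds because every path into $L'$ or into a client passes through some $(L,L')$, and clients are sinks.

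\textbf{Weighted-sum objective.} For mul-2-WS I append a second snapshot in which every client is trivially reachable by both models. For example, give each client a dedicated capacity-$1$ arc from $s_1\S 2$ or $s_2\S 2$, and connect $s_1$ and $s_2$ across snapshots through their inter-snapshot arcs. Every instance is then feasible, and with unit weights the WS value equals $m+n$ if and only if all clients are served in snapshot~1. By the argument above, that happens exactly when the 3SAT instance is satisfiable. Therefore both separate-server variants are NP-hard even with equal model sizes.
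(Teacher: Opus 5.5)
Your reduction is correct, but it takes a different route from the paper's. The paper reduces from the directed two arc-disjoint paths problem of Fortune, Hopcroft and Wyllie. The given digraph becomes a single snapshot with $s_1=o_1$ and $s_2=o_2$, and $d_1$ and $d_2$ are the only clients of the two models. With unit sizes and unit capacities, the two arborescences degenerate into two arc-disjoint paths.

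You instead give a self-contained 3SAT reduction in the spirit of the paper's proofs for 2-UF-MM and mul-2-MM, with per-literal bottleneck arcs replacing the asymmetric model sizes. Both directions check out, including the case where $T_1$ uses both bottlenecks of a variable. Your second-snapshot padding makes the WS case explicit, which the paper leaves implicit with ``the theorem then follows.'' It relies on arcs only pointing forward in time, so the snapshot-1 part of any routing is exactly a single-snapshot instance.

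The two arguments buy different things. The paper's is shorter and works with one client per model, i.e., even when multicast collapses to unicast. It therefore also yields hardness of the separate-server, equal-size 2-UF-MM. However, it needs snapshot topologies containing directed cycles, because two arc-disjoint paths can be found in polynomial time on acyclic digraphs. Yours needs many clients per model, and the multicast structure does essential work: one copy on $(L,L')$ serves every clause containing $L$. In exchange, your snapshot is a layered acyclic graph of depth three. So you show the hardness persists on acyclic topologies, where the path-based reduction cannot apply.
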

\begin{proof}
We provide a reduction from the two edge-disjoint paths
problem in directed graphs that is NP-complete~\cite{FoHoWy80}.
In this problem, we are given a directed graph and two separate pairs of 
origin-destination nodes, $o_1$ and $d_1$, and $o_2$ and $d_2$.
The task is to determine if there are two edge-disjoint paths
connecting the two node pairs, respectively.

A reduction to mul-2-MM is fairly easy. Given the directed graph in
the two edge-disjoint paths problem, we use the same topology as a
single snapshot in mul-2-MM. Let $s_1 = o_1$ and $s_2 = o_2$.  Both
models are of size one and all arc capacities are set to be one.  In
addition, suppose $d_1$ is the only client of the first model, and
$d_2$ is the only client of the second model. Note that for each model
we ask for a single directed path due to the presence of one client,
and a directed path is a special case of arborescence.  Then
determining if multicast routing for the mul-2-MM instance can be done in
one snapshot or not gives the correct answer of the two edge-disjoint paths
problem.  The theorem then follows.
\end{proof}

\section{Model Collection and Client Selection}
\label{sec:uploading}

Let us examine the tractability aspect of uploading the (updated) local models. Similar to downloading, we have the options of using SF or UF in uploading the model of each client, and scenarios with one and two models. However, since the local models differ from each other, multicast is not applicable. Moreover, due to limited communication resource, it may not be feasible to collect the local models from all clients by snapshot $K$. In this case, CS becomes necessary. Since not all clients contribute equally to the aggregation of the global model, clients are selected based on utility maximization (recall Section \ref{sec:obj}). When there are two FL models, we assume a common server and comment later on the case of two separate servers. The problem variants and their tractability results are summarized in Table~\ref{tab:uploading}. \par
To avoid additional notation, in the following, we reuse $\CK$ for the set of snapshots in uploading, as doing so does not cause ambiguity. Consequently we reuse notation $\CG$ for the underlying TVG for uploading, even though $\CK$ and $\CG$ are different from those for the downloading phase. 

\begin{table}[t]
\centering
\caption{Problem variants and their computational complexity for mode collection.
(UF = unsplittable flow, SF = splittable flow, CS = client selection,
P = polynomial time.)}
\begin{tabular}{c|cc|cc}
\toprule
\multirow{2}{*}{\textbf{Models}} &
\multicolumn{2}{c|}{\textbf{UF}} &
\multicolumn{2}{c}{\textbf{SF}} \\
\cmidrule(lr){2-3} \cmidrule(lr){4-5}
 & \textbf{No CS} & \textbf{CS} & \textbf{No CS} & \textbf{CS} \\
\midrule
\textbf{1} &
\begin{tabular}[c]{@{}c@{}}1-UF-NCS\\ \textit{(P)}\end{tabular} &
\begin{tabular}[c]{@{}c@{}}1-UF-CS\\ \textit{(P)}\end{tabular} &
\begin{tabular}[c]{@{}c@{}}1-SF-NCS\\ \textit{(P)}\end{tabular} &
\begin{tabular}[c]{@{}c@{}}1-SF-CS\\ \textit{(NP-hard)}\end{tabular} \\[4pt]
\textbf{2} &
\begin{tabular}[c]{@{}c@{}}2-UF-NCS\\ \textit{(NP-complete)}\end{tabular} &
\begin{tabular}[c]{@{}c@{}}2-UF-CS\\ \textit{(NP-hard)}\end{tabular} &
\begin{tabular}[c]{@{}c@{}}2-SF-NCS\\ \textit{(P)}\end{tabular} &
\begin{tabular}[c]{@{}c@{}}2-SF-CS\\ \textit{(NP-hard)}\end{tabular} \\
\bottomrule
\end{tabular}
\label{tab:uploading}
\end{table}


\subsection{Tractable Problem Variants}

We begin with the tractable problem variants. With one model, if we do
not need to make client selection and the flow of a client can be
split on multiple paths, the problem reduces to determining the
feasibility of flow in a graph with given source and destination nodes.
This is classic network flow problem.

\begin{theorem}
1-SF-NCS is polynomial-time solvable.
\end{theorem}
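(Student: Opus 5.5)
The plan is to reduce 1-SF-NCS to a single-commodity maximum-flow (equivalently, feasibility-of-flow) problem on an augmented version of the uploading TVG $\CG$. This mirrors the construction used for 1-SF-MM, but with the direction of flow reversed. Because there is only one model and flow is splittable, all local models are of the same type and destined to the same server, so they can be treated as a single commodity. We will not normalize capacities; each client $c \in \CC$ must send exactly $q$ units of data to the server by snapshot $K$.

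\textbf{Construction.} Add a super-source node $\sigma$. For each client $c$, add an arc $(\sigma, c\S\kappa_c)$ of capacity $q$; this injects the client's local model at the snapshot from which its uploading may start. Use all intra- and inter-snapshot arcs of $\CG$ for snapshots $1,\dots,K$ with their given capacities $p_{uv}$; inter-snapshot arcs carry capacity $q|\CC|$, so caching is unconstrained. Add a super-sink $\tau$ with arcs $(s\S k, \tau)$ for $k = 1,\dots,K$, each of capacity $q|\CC|$. Alternatively, make $s\S K$ the sink and rely on the inter-snapshot caching arcs of the server. We then compute a maximum $\sigma$--$\tau$ flow, or equivalently solve the LP feasibility problem with supply $q|\CC|$ at $\sigma$ and demand $q|\CC|$ at $\tau$.

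\textbf{Correctness.} The claim is that the instance of 1-SF-NCS is feasible if and only if the max-flow value equals $q|\CC|$.

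\emph{Routing to flow.} Given a feasible splittable routing, the path flows of all clients sum to a feasible $\sigma$--$\tau$ flow of value $q|\CC|$.

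\emph{Flow to routing.} Given a flow of value $q|\CC|$, every arc $(\sigma, c\S\kappa_c)$ is saturated. Apply flow decomposition. Since $\sigma$ is the only source and $\tau$ the only sink, cycles can be discarded, and each path starts with an arc $(\sigma, c\S\kappa_c)$. Assign that path's flow to client $c$; this yields exactly $q$ units per client, routed along multiple paths. These paths respect the capacities and reach the server by snapshot $K$, which is precisely a splittable routing.

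\textbf{Main obstacle.} The one point needing care is this attribution of flow to individual clients. It works only because the commodity is single: local models of different clients are interchangeable with respect to the capacity constraints and share the same destination. The decomposition therefore requires no multicommodity reasoning, in contrast with the two-model unsplittable case. Polynomiality follows because the augmented graph has $O(IK)$ nodes and arcs, and max-flow, or the LP, is solvable in polynomial time~\cite{ahuja1993network}.
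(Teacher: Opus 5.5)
Your proposal is correct and takes essentially the paper's approach: both treat all local updates as a single commodity, injected at $c\S\kappa_c$ with supply $q$ and absorbed at the server by snapshot $K$, and both decide feasibility with a polynomial-time network-flow LP. The differences are cosmetic. The paper minimizes the total capacity slack $\sum_{(u,v)\in\CA}\rho_{uv}$ and tests for a zero optimum, whereas you use a super-source max-flow and make explicit the flow-decomposition step that assigns exactly $q$ units to each client, which the paper leaves implicit. (Minor nit: the arc count is $O(I^2K)$ in general, not $O(IK)$, which does not affect polynomiality.)
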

\begin{proof}
For graph $\CG = (\CV, \CA)$ with capacity $p_{uv}$ on arc $(u,v)$,
let each $c \S \kappa_c, c \in \CC$ be a source node of flow supply $q$, and $s \S
K$ the destination node with flow demand $q |\CC|$. Consider introducing
an auxiliary variable $\rho_{uv}, (u,v) \in \CA$, and treat $p_{uv} +
\rho_{uv}$ as the capacity, with the objective function of minimizing
$\sum_{(u,v) \in \CA} \rho_{uv}$, subject to flow conservation. This
is a rather standard optimal network flow
problem~\cite{ahuja1993network} that can be formulated as an LP.
Clearly, whether or not the optimum value is zero answers the question
of feasibility, and the theorem follows.
\end{proof}

Suppose there are two FL models. The above idea remains valid for
2-SF-NCS, by letting the nodes representing clients in $\CC_1$ and
$\CC_2$ in snapshot one be the sources nodes, and $q_1$ and $q_2$ be
the amount of supply, respectively. If a client has both models to
upload, its supply is set to be $q_1 + q_2$. This leads to, as above,
an optimal network flow problem that can be modelled as an LP,
giving the following result.

\begin{corollary}
2-SF-NCS is polynomial-time solvable.
\end{corollary}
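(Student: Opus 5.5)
The plan is to reduce 2-SF-NCS to a single-commodity feasibility question, in the same way Corollary~\ref{theo:2sfmm} was obtained from the one-model construction. Since the flow is splittable and both models go to the common server, the two models need not be told apart: an arc's capacity constrains only the total amount of data on it, and all flow ends at $s \S K$. The routing question therefore becomes whether the aggregated supplies can be sent to $s \S K$ within the arc capacities.

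\textbf{Construction.} Take the TVG $\CG=(\CV,\CA)$ for uploading, with capacities $p_{uv}$ in original data units (no normalization by model size). For each client $c \in \CC_1 \cup \CC_2$, let node $c \S \kappa_c$ be a source with supply $q_1\mathbf{1}[c\in\CC_1]+q_2\mathbf{1}[c\in\CC_2]$. If the two models may start at different snapshots at the same client, give each model its own source node and the corresponding $q_1$ or $q_2$. The inter-snapshot capacity $q_1|\CC_1|+q_2|\CC_2|$ is non-binding, so caching is free. Make $s \S K$ the single sink with demand $q_1|\CC_1|+q_2|\CC_2|$. Then solve the slack LP from the proof of 1-SF-NCS: use capacities $p_{uv}+\rho_{uv}$, minimize $\sum_{(u,v)\in\CA}\rho_{uv}$, and impose flow conservation. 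The instance is feasible if and only if the optimum is zero.

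\textbf{Correctness.} One direction is immediate. A feasible two-model splittable routing, summed over the two models, gives a single-commodity flow meeting the supplies, the demand and the capacities with $\rho=0$.

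For the converse, take a zero-slack single-commodity flow and apply flow decomposition~\cite{ahuja1993network}. Since $\CG$ is acyclic in time (apart from intra-snapshot cycles, which can be cancelled), the flow splits into path flows, each from some source to $s \S K$. The path flows leaving source $c \S \kappa_c$ total exactly its supply. Split them fractionally, assigning $q_1$ worth to model~1 and $q_2$ worth to model~2. This gives each client a splittable routing of each of its local models, and arc loads are unchanged, so capacities still hold. Every path ends at $s \S K$, so all models arrive by snapshot $K$. The LP has polynomial size, so the whole procedure runs in polynomial time.

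\textbf{Main obstacle.} The only subtle step is the converse argument that the aggregate flow can be disaggregated into per-model, per-client routings. This works only because there is a single common sink and splitting is allowed. With separate servers the problem becomes a genuine two-commodity flow, though that is still an LP. I would handle it by writing the multicommodity LP directly instead of aggregating.
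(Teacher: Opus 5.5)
Your proposal is correct and is essentially the paper's argument: you aggregate both models into a single commodity with supplies $q_1$, $q_2$ or $q_1+q_2$ at the client nodes and a single sink $s \S K$ of demand $q_1|\CC_1|+q_2|\CC_2|$, then reuse the slack LP from the 1-SF-NCS proof. Your flow-decomposition step, which splits the aggregate flow back into per-model, per-client routings, supplies the justification the paper leaves implicit. Your caveat that this relies on the common sink is also apt: for separate servers your two-commodity LP is the right tool, since merely adding a super sink, as in the paper's later remark, would let one model's data terminate at the other model's server.
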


For 1-UF-NCS, it has a similar structure as 1-UF-MM, simply because of
the MM objective in uploading. Utilizing the same type of arguments for
proving Theorem~\ref{theo:1ufmm}, we obtain the
following result.

\begin{theorem}
1-UF-NCS is polynomial-time solvable.
\label{theo:1-UF-NCS}
\end{theorem}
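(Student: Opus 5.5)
The plan is to reduce 1-UF-NCS to an integral minimum-cost flow problem on a modified TVG, following the argument for Theorem~\ref{theo:1ufmm} with the flow direction reversed.

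\textbf{Normalization.} There is a single model and each client routes its whole model on one path. So we first normalize every intra-snapshot arc capacity to $\lfloor p_{uv}/q \rfloor$, the number of full model copies the arc can carry. Each inter-snapshot (caching) arc gets normalized capacity $|\CC|$. After this, an unsplittable routing of all local models corresponds to routing one unit of flow per client along a single path in $\CG$. Since $q$ is common to all clients, the normalized capacity constraint is equivalent to the original one.

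\textbf{Flow instance.} We set up the flow problem as follows.
\begin{itemize}
\item Each client $c \in \CC$ gets a source node $c \S \kappa_c$ with supply $1$. This captures the earliest uploading snapshot.
\item The sink is $s \S K$ with demand $|\CC|$. Caching arcs on the server let flow arriving earlier wait until snapshot $K$.
\item All original arcs have unit cost zero.
\item For each client $c$, we add an artificial arc $(c \S \kappa_c, s \S K)$ with capacity $1$ and unit cost $1$. This makes the instance always feasible.
\end{itemize}
We then solve the minimum-cost flow problem as an LP. All capacities, supplies and demands are integers, so by the integrality theorem of network flows~\cite[Theorem~9.10]{ahuja1993network} there is an integral optimal flow, and it is found in polynomial time.

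\textbf{Correctness.} It remains to show that the optimum cost is zero if and only if all local models can be uploaded unsplittably by snapshot $K$.
\begin{itemize}
\item If a feasible unsplittable routing exists, sending one unit along each client's path gives a zero-cost flow.
\item Conversely, a zero-cost integral flow uses no artificial arc. It therefore decomposes into $|\CC|$ unit paths, one from each source $c \S \kappa_c$ to $s \S K$. Each path is the unsplittable route of that client's model, and the normalized capacities guarantee the original capacities are respected.
\end{itemize}

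\textbf{Main obstacle.} The key step is the path decomposition in the converse direction: each unit of flow must start at a distinct client source and be routed integrally, not mixed across clients. With a single commodity, flow from different clients is indistinguishable. However, every unit is the same model size $q$ and all clients share the common sink $s \S K$, so any assignment of the decomposed unit paths to the clients whose sources they start from is valid. This is exactly why the argument works for one model and breaks down for two models of different sizes.
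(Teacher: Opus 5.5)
Your proposal is correct and takes essentially the same approach as the paper. Like the paper, you normalize capacities to $\lfloor p_{uv}/q \rfloor$, place unit supplies at $c \S \kappa_c$ and demand $|\CC|$ at $s \S K$, add a unit-cost artificial arc $(c \S \kappa_c, s \S K)$ for each client, and use min-cost flow integrality so that zero optimal cost certifies an unsplittable routing. Your explicit path-decomposition argument for the converse, where any zero-cost cycles can simply be discarded, spells out a step the paper leaves implicit.
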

\begin{proof}
Similar to the proofs of Theorems~\ref{theo:1ufws} and
\ref{theo:1ufmm}, we can normalize the arc capacity $p_{uv}, (u,v) \in
\CA$ with respect to the model size $q$, to obtain integer capacity
values.  Let each node $c \S \kappa_c, c \in
\CC$ be a source with supply 1.0, and node $s \S K$ be the sink node
of demand $|\CC|$. For each $c \in \CC$, a direct arc $(c \S \kappa_c, s \S K)$
with unit flow cost of 1.0 is added. The unit flow cost
of all original arcs are set to be zero. The resulting minimum-cost
flow problem has integrality property, hence the conclusion.
\end{proof}

Now consider 1-UF-CS. This problem can be solved within polynomial time by utilizing the feature of uniform demand of sources and the integer property of a minimum cost flow problem. 

\begin{theorem}
1-UF-CS is polynomial-time solvable.
\end{theorem}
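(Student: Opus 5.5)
We reduce 1-UF-CS to a single minimum-cost flow problem and rely on integrality, in the same spirit as the proofs of Theorems~\ref{theo:1ufws} and \ref{theo:1-UF-NCS}. Since there is one model and each selected client's model must travel on a single path, we normalize every arc capacity to $\lfloor p_{uv}/q \rfloor$. In these units every client has the same demand of one unit, which is what makes the unsplittable requirement harmless.

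The construction works as follows. For each client $c \in \CC$ we introduce an auxiliary source node $b_c$ with supply $1$. From $b_c$ there are two arcs:
\begin{itemize}
\item an arc $(b_c, c \S \kappa_c)$ of capacity $1$ and unit cost $-u_c$;
\item a bypass arc $(b_c, t)$ of capacity $1$ and cost $0$.
\end{itemize}
Here $t$ is a super sink with demand $|\CC|$. We also add an arc $(s \S K, t)$ of capacity $|\CC|$ and cost $0$. All original TVG arcs keep cost zero and their normalized integer capacities. Equivalently, one could maximize flow value weighted by utilities, but the bypass formulation keeps the problem a plain minimum-cost flow in which feasibility is trivial.

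Since all supplies, demands and capacities are integers, the min-cost flow LP has an integral optimal solution \cite[Theorem~9.10]{ahuja1993network}, computable in polynomial time. In such a solution each $b_c$ sends its unit entirely either into the network or onto the bypass. The clients whose unit enters the network are the selected ones. Because the network carries no negative-cost cycles among original arcs, we can decompose the integral flow into unit paths, one per selected client, from $c \S \kappa_c$ to $s \S K$. Each path carries one whole model within the normalized capacities, which is a valid unsplittable routing. The cost equals $-\sum_{c \text{ selected}} u_c$. Conversely, any feasible 1-UF-CS solution maps to a feasible integral flow of the same cost: route each selected model along its path and send every unselected client through its bypass. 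Hence the optimum of the min-cost flow gives the maximum-utility selection.

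The main obstacle is justifying the path decomposition. An integral flow need not send the unit from $b_c$ to the sink along a path that is ``owned'' by $c$. However, since all units are identical, any decomposition of the integral flow into $|\CC|$ unit paths, each starting at some $b_c$, gives each selected client a single path. Shared arcs are never overloaded because the integral arc flow counts whole models and is bounded by $\lfloor p_{uv}/q\rfloor$. Cycles carrying zero cost can simply be cancelled. This interchangeability of units is exactly where the uniform demand is used, and it is what fails for two models of different sizes.
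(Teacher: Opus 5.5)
Your proposal is correct and follows essentially the same route as the paper: normalize capacities by $q$, attach each candidate source $c \S \kappa_c$ through a capacity-one arc of cost $-u_c$, add a zero-cost bypass so the min-cost flow is always feasible, and invoke integrality (there are no negative cycles because the negative-cost arcs leave pure source nodes). The only difference is cosmetic. The paper uses a single super-source $a$ with one bypass arc $(a, s \S K)$, where you use per-client sources $b_c$ with individual bypasses to a super sink $t$; your explicit path-decomposition and converse arguments spell out what the paper leaves implicit.
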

\begin{proof} 
For a TVG $\CG = (\CV, \CA)$, similar to the proofs of Theorems~\ref{theo:1ufws}, \ref{theo:1ufmm}, and \ref{theo:1-UF-NCS}, we can normalize the arc capacity $p_{uv}, (u,v) \in \CA$ with respect to the model size $q$, to obtain integer capacity values. The cost of all arcs in $\CA$ is set to zero. We then add an auxiliary node $a$ and one arc from it to each candidate source node $c \S \kappa_c, c \in \CC$. The arc from $a$ to $c \S \kappa_c$ has capacity one and cost $-u_c$ where $u_c$ is the utility for client $c$. Besides, one arc from node $a$ to node $s \S K$ with a sufficiently large capacity ($\ge |\CC|$) and zero utility is added. Since all arc capacities are integers, there exist an integer solution at the optimum of the minimum cost problem with source $a$, sink $s \S K$, and demand $|\CC|$. Note that this problem is always feasible due to arc $(a, s \S K)$. In addition, the integrality property also holds for negative cost, and in our graph there is no negative cycle. Furthermore, since the arc between $a$ and any $c \S \kappa_c, c \in \CC$ has capacity one, there exists flow of either size one or zero going through each candidate client $c \S \kappa_c$. The clients with flow of size one going through them are thus the optimally selected clients. 
\end{proof}

\subsection{NP-complete Problem Variants}

Without the need of CS, the problem variant that remains to be examined is
2-UF-NCS. It has in fact a similar structure to 2-UF-MM for
downloading. By considering the reduction proof for
Theorem~\ref{theo:2ufmm} and modifying the graph in the proof
by setting the arcs in the reserve direction, we obtain the following
theorem.

\begin{theorem}
2-UF-NCS is NP-complete.
\end{theorem}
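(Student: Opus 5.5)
The claim has two parts: membership in NP, and NP-hardness via the 3SAT reduction of Theorem~\ref{theo:2ufmm} with every arc reversed.

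\emph{Membership in NP.} 2-UF-NCS is a feasibility question, so a certificate is one path per (client, model) pair in $\CG$. Each path goes from $c \S \kappa_c$ to $s \S K$. We verify in polynomial time that each listed path exists in $\CG$ and starts no earlier than snapshot $\kappa_c$. We then sum the carried model sizes ($q_1$ or $q_2$) on every arc and compare against $p_{uv}$. A path can be assumed simple, so its length is at most $|\CV|$ and the certificate is polynomial in size.

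\emph{NP-hardness.} Start from a generic 3SAT instance and build a single-snapshot instance ($K=1$, all $\kappa_c=1$) with model sizes $n$ (big) and $1$ (small). The arcs are those of the construction in the proof of Theorem~\ref{theo:2ufmm}, reversed:
\begin{itemize}
\item for every variable $x_\ell$, the big-model client $v_\ell$ has arcs $(v_\ell, x_\ell)$ and $(v_\ell, \bar x_\ell)$ of capacity $n$;
\item each literal node has an arc into the server $s$ of capacity $n$;
\item each clause client (small model) has an arc of capacity $1$ to each of the three literal nodes it contains.
\end{itemize}

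The key equivalence follows from two facts. First, any flow from $v_\ell$ uses a single path and has size $n$, so it goes through exactly one of $x_\ell$ or $\bar x_\ell$ and saturates that literal's arc to $s$. Second, a clause client's unit flow must pass through one of its three literal nodes. It can do so only if that literal's arc to $s$ is not saturated by a big model.

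Suppose $v_\ell$ routes via $x_\ell$, which we read as setting $\bar x_\ell$ true. Then the literal node $\bar x_\ell$ is free. Its arc to $s$ has capacity $n$, which is enough to carry all (at most $n$) clause clients that route through it. So all models reach $s$ within the snapshot if and only if there is a truth assignment in which every clause contains a true, i.e.\ free, literal. This is exactly satisfiability.

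\emph{Main obstacle.} The step needing care is checking that no unintended routes exist. Clause flows cannot pass through $v_\ell$ nodes, since no arcs enter them. A big model cannot split, and it cannot route through clause nodes, whose outgoing arcs have capacity $1 < n$ (for $n \ge 2$). The aggregate capacity of a free literal's arc, $n$, covers the worst case where all clauses use it. Given these checks the reduction is clearly polynomial, and the theorem follows.
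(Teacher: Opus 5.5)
Your proposal is correct and follows essentially the same approach as the paper: it reverses the arcs of the 3SAT gadget from Theorem~\ref{theo:2ufmm}, so that the unsplittable big model of $v_\ell$ saturates the capacity-$n$ arc of exactly one literal node, and each clause client must then reach $s$ through a free (true) literal. You also give an explicit NP-membership argument with a path certificate; the paper leaves this step implicit, although it is needed to justify the ``complete'' part of the statement.
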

\begin{proof}

We reuse the notion and reduction in the proof of
Theorem~\ref{theo:2ufmm}, but with reverted arc
directions, see Figure~\ref{fig:2ufncs}.

\begin{figure}
\centering
  \includegraphics[width=0.37\textwidth]{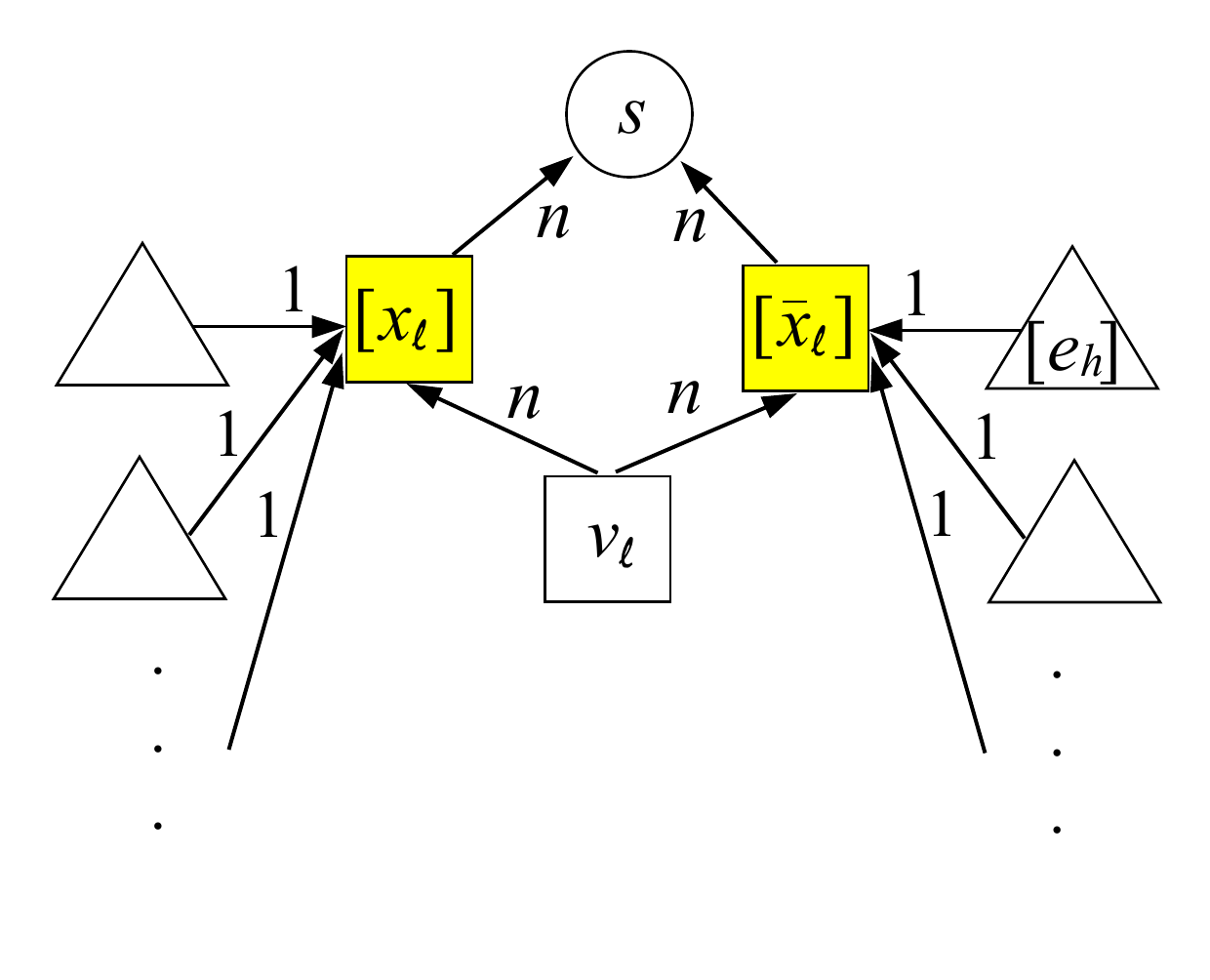} \caption{An
  illustration of reduction from 3SAT to 2-UF-NCS. ($[\cdot]$ inside a
  node indicates the corresponding literal and its occurrence in the
  clauses in 3SAT. The numbers beside the arcs are the capacity
  values.)}  \label{fig:2ufncs}
\end{figure}

The big model from client $v_\ell$ to server $s$ must be routed either
via literal node $x_\ell$ or literal node ${\bar x}_\ell$. In either case,
the capacity of the selected path becomes exhausted, whereas all three
clause clients connected to the literal node of the other path can
send their models to the server. 
Hence, except that the flow is on the opposite direction,
we obtain the same type of reduction from 3SAT as in the proof of Theorem~\ref{theo:2ufmm},
and the result follows.
\end{proof}

Next, we turn our attention to the scenario where the network resource
does not admit model uploading for all clients by the last snapshot
(i.e., snapshot $K$), and therefore CS is present. The question of
feasibility is thus to determine whether or not the local models of
the clients can be routed by snapshot $K$, subject to some CS target.
The CS target can be, for example, if it is feasible to select a given
number of clients such that their models can all be routed.  As
another example, we may ask if there is a CS solution such that
routing is feasible, and the total local loss of the selected clients
does not exceed some given threshold. We will show the this task is
NP-complete even for a single FL model.  Hence, optimizing the
selection of clients, no matter the objective function, is NP-hard.

\begin{theorem}
1-SF-CS is NP-hard.
\end{theorem}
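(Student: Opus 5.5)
We reduce from a known NP-complete problem to the decision version of 1-SF-CS. The decision version asks: given a single model of size $q$, a TVG, start snapshots $\kappa_c$ and utilities $u_c$, is there a subset of clients with total utility at least a threshold $U$ whose models can all be routed (splittably) to $s \S K$ by snapshot $K$?

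The natural source is the knapsack/subset-sum family (\textsc{Partition}). The reason is that splittable flow removes the path combinatorics, leaving only the combinatorics of choosing sources under a capacity budget. With one snapshot, all clients attached to the server through a common bottleneck is not enough on its own, because the clients all have the same size $q$. We therefore need heterogeneous effective demands, and 3SAT (as in the proof of Theorem~\ref{theo:1sfws}) is the more reliable source given the uniform model size. So the plan is a 3SAT reduction in the spirit of Theorem~\ref{theo:1sfws}, with arcs reversed as in the proof of 2-UF-NCS.

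Construction, in a single snapshot ($K=1$) with $q=1$:
\begin{itemize}
\item For each variable $x_\ell$, add two literal clients $x_\ell$ and $\bar x_\ell$. Each has an arc of capacity $1$ to a shared gadget node $g_\ell$, and $g_\ell$ has an arc to $s$ of capacity $1+\frac{1}{n+1}$. Hence at most one literal client per variable can be fully selected, and doing so leaves exactly $\frac{1}{n+1}$ of spare capacity through $g_\ell$.
\item For each clause $e_h$, add a clause client with a dedicated arc to $s$ of capacity $\frac{n}{n+1}$. It also has arcs of capacity $\frac{1}{n+1}$ to the literal nodes of its three literals. The routing then passes literal node $\to g_\ell \to s$, so the spare capacity is usable only when the literal node itself is not selected; this mirrors the complementation trick of Theorem~\ref{theo:1sfws}.
\item Assign utility $u=M$ to each literal client and $u=1$ to each clause client, with $M>n$, and set the threshold $U=mM+n$.
\end{itemize}

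The first step is to show that reaching $U$ forces exactly one literal client per variable to be selected. This follows from the capacity $1+\frac{1}{n+1}<2$ together with the requirement of $m$ literal selections, since $M>n$ means the clause clients alone cannot compensate. It forces all $n$ clause clients to be selected as well.

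The second step is the key difficulty. Each clause client needs an extra $\frac{1}{n+1}$ beyond its dedicated arc, and must obtain it through a literal gadget whose spare capacity is not consumed by the selected literal. I must check carefully that the spare $\frac{1}{n+1}$ per gadget suffices. In Theorem~\ref{theo:1sfws} the spare was $\frac{n}{n+1}$, serving up to $n$ clauses. Here the analogous requirement is a gadget capacity of $1+\frac{n}{n+1}$. But that value might allow splitting two literals as $1$ and $\frac{n}{n+1}$, and since a literal client needs a full unit, this is fine: its remainder is less than $1$, so the second literal still cannot be selected. I will therefore use $1+\frac{n}{n+1}$.

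The final step concerns the interaction of splittable flow with selection. A selected literal's own unit could route partly via clause arcs in reverse, and this must be excluded. Orienting the clause arcs from clause to literal node prevents it. With that in place, feasibility at threshold $U$ holds if and only if there is a satisfying assignment, where a selected literal client means that literal is false. Since the decision version is NP-complete, 1-SF-CS is NP-hard.

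The main obstacle I anticipate is making the gadget capacities tight enough that fractional splitting cannot partially select or cheat. This is where I would recheck each capacity inequality.
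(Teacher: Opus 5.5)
Your reduction is correct in its final form (gadget capacity $1+\frac{n}{n+1}$), but it follows a different route from the paper.

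\textbf{The paper's route.} The paper reduces from the decision version of minimum vertex cover. Each vertex becomes a client N-node with arcs of capacity $E^{\text{MVC}}$ to the E-nodes of its incident edges and to an auxiliary node $a$. Each E-node has a unit-capacity arc to $s$, arc $(a,s)$ has capacity $(n^{\text{MVC}}-1)E^{\text{MVC}}$, and $q=E^{\text{MVC}}$. Routing $n^{\text{MVC}}$ selected models then uses the entire in-capacity of $s$. This forces every unit E-arc to carry flow, so the selected N-nodes form a cover. The proof is a single tight-cut counting argument with uniform utilities, so it shows hardness already for ``can $k$ clients be selected''.

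\textbf{Your route.} You reduce from 3SAT using the literal/clause gadget. A selected literal client saturates its own exit arc to $g_\ell$, which blocks clause traffic through that literal node. This ties 1-SF-CS to the same complementation mechanism the paper uses for 1-SF-WS, 2-UF-MM and 2-UF-NCS. It makes explicit that hardness comes from a selected source saturating its own bottleneck, something splittability cannot undo.

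Your weight $M$ is unnecessary. Because $1+\frac{n}{n+1}<2$ caps literal selections at one per variable whatever the utilities are, setting all $u_c=1$ and threshold $m+n$ forces the same structure. So your construction also gives hardness of the cardinality version.

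\textbf{Write-up fixes.}
\begin{itemize}
\item State the gadget capacity $1+\frac{n}{n+1}$ from the start. With $1+\frac{1}{n+1}$ the forward direction fails whenever two clauses are satisfied only through the same variable.
\item Record the key check explicitly: the total clause deficit is $n\cdot\frac{1}{n+1}=\frac{n}{n+1}$. Hence arc $g_\ell\to s$ suffices even if every clause relays through one gadget.
\item Say explicitly that an unselected literal client still forwards others' traffic. The TVG model allows this; alternatively, attach each literal client to a separate relay node.
\item Drop the discarded Partition discussion.
\end{itemize}
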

\begin{proof} 
We provide a reduction proof via minimum-vertex cover (MVC). In MVC,
we are given an undirected graph $\CG^\text{MVC} = (\CV^\text{MVC},
\CE^\text{MVC})$, and the task is to select a minimum number of nodes
of $\CV^\text{MVC}$, such that all edges are adjacent to at least one
of the selected nodes (that is, covered by at least one node).  Let
$V^\text{MVC} = |\CV^\text{MVC}|$ and $E^\text{MVC} = |\CE^\text{MVC}|$.
Clearly, MVC can be solved by bi-section search on interval $[1,
V^\text{MVC}]$, and for each trial number $n^\text{MVC}$, answering the
feasibility question: Can all edges be covered by using $n^\text{MVC}$
nodes?  Since MVC is NP-hard, this feasibility problem is NP-complete.

We reduce the above feasibility version of MVC to a specific instance
of 1-SF-CS with a single snapshot. An illustration is given in
Figure~\ref{fig:1sfcs}. The graph of the 1-SF-CS instance is
constructed as follows. For each node $i \in \CV^\text{MVC}$, there is
a node that is a client; these are the nodes of the top layer in the
illustration.  We refer to such a node as an N-node. For each edge
$(i,j) \in \CE^\text{MVC}$, there is a node corresponding to this
edge. We refer to such a node as an E-node. The E-nodes are the ones
in square shape in the illustration.  For each E-node, there are
exactly two incoming arcs from the two N-nodes constructed for
the two nodes of the corresponding edge in the MVC instance. These arcs
have capacity $E^\text{MVC}$.  From each E-node, there is one incoming arc to
the server node $s$, with capacity 1.0. There is an auxiliary node
$a$, with one arc from each N-node of capacity 
$E^\text{MVC}$. Finally, there is an arc $(a, s)$ of capacity 
$(n^\text{MVC} - 1) E^\text{MVC}$. The model size $q = E^\text{MVC}$.

\begin{figure}
\centering
  \includegraphics[width=0.505\textwidth]{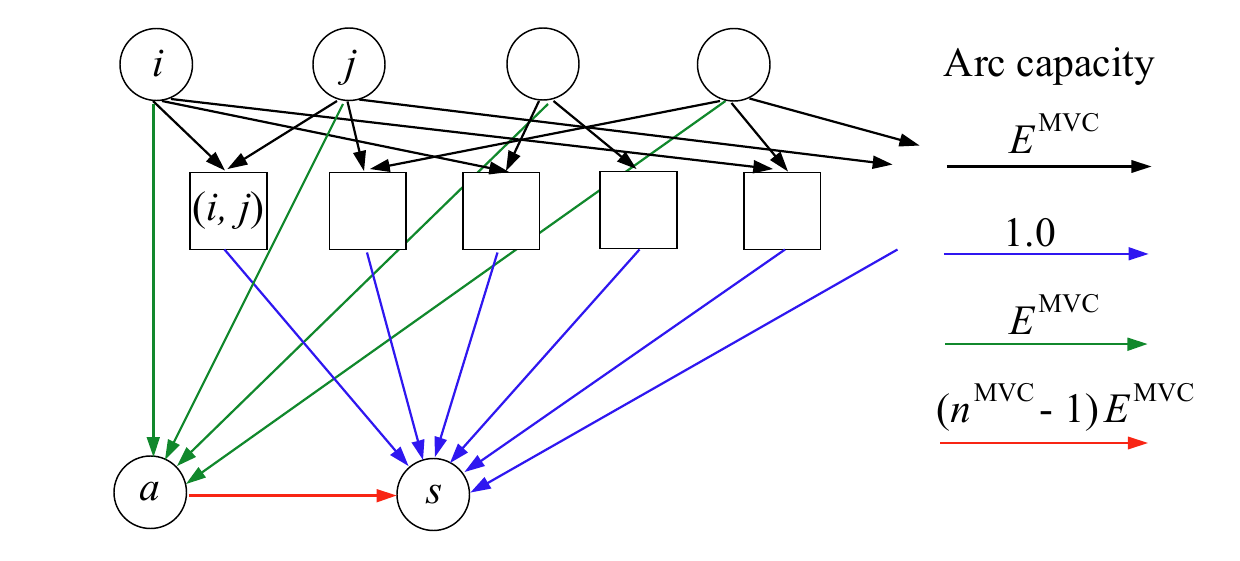} \caption{An
  illustration of reduction from MVC to 1-SF-CS.}  \label{fig:1sfcs}
\end{figure}

For the 1-SF-CS instance defined above, we ask the question: Is it
possible to select $n^\text{MVC}$ clients such that there is a
feasible routing solution for them? We will prove that
this question is equivalent to the feasibility question for MVC.

Suppose it is feasible to cover all edges of $\CE^\text{MVC}$ with
$n^\text{MVC}$ nodes in $\CV^\text{MVC}$.  Let such a node subset be
$\CV^*$, which is also referred to as a cover.  We associate each node
$i \in \CV^*$ with a number, obtained as follows. First, we find the
node covering the largest number of edges, and this number is the
number that the node is associated with. Tie, if any, is broken
arbitrarily.  These edges are then removed, and the node covering the
largest number of the remaining edges is found. The process is
repeated until no edge remains (which will eventually happen since
$\CV^*$ is a cover).  At this stage, zero is the value given to
remaining nodes.  Denote this number by $\eta_i, i \in \CV^*$.

Consider selecting the N-nodes corresponding to $\CV^*$ as the clients
subject to routing.
For each $i \in \CV^*$, we let a flow of $\eta_i$ to be routed to
server $s$ via the E-nodes, and specifically, via the E-nodes
representing the edges covered by $i$ in the above process, such that
a unit flow is routed via each of the E-nodes.  Doing so is clearly
feasible by the construction of the graph and the capacity
parameters. The remaining amount of $E^\text{MVC} - \eta_i$ is routed
via the auxiliary node $a$. Note that the total capacity of all
incoming arcs to $s$ equals $n^\text{MVC} E^\text{MVC}$ which is exactly the
total amount of flow to be routed. Therefore, there is a feasible
routing solution for the $n^\text{MVC}$ selected clients.

Conversely, suppose there exist $n^\text{MVC}$ clients with a feasible
routing solution. Since the capacity of arc $(a,s)$ is $(n^\text{MVC}
- 1) E^\text{MVC}$, in the routing solution, an exact flow amount of
$E^\text{MVC}$ is routed via the E-nodes. Because the arc connecting an E-node to
server $s$ has unit capacity, all these arcs must be in use.
Consequently, there is incoming flow from some client to every E-node
in the routing solution. By construction, there is an arc from an N-node
to an E-node if and only if the corresponding node in 
$\CV^\text{MVC}$ covers the edge represented by the E-node in
$\CG^\text{MVC}$. Therefore, the $n^\text{MVC}$ clients
form a cover in $\CG^\text{MVC}$, and the theorem follows.
\end{proof} 
 
Having identified the complexity of 1-SF-CS, it is straightforward
to generalize it to two models. Obviously, we can apply the proof above
by duplicating the graph of the MVC problem, giving the corollary below.

\begin{corollary}
\label{theo:2sfcs}
2-SF-CS is NP-hard.
\end{corollary}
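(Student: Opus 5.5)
The plan is to reduce 1-SF-CS to 2-SF-CS by placing two independent copies of the hard instance side by side. We reuse, for each model, the single-snapshot gadget built from the MVC instance in the proof of 1-SF-CS. For model 1 we build the gadget once: N-nodes as clients in $\CC_1$, E-nodes, the auxiliary node $a_1$, and arc capacities exactly as before, with $q_1 = E^\text{MVC}$. For model 2 we build a second, disjoint copy in the same way: N-node clients in $\CC_2$, E-nodes, the auxiliary node $a_2$, and $q_2 = E^\text{MVC}$. Both copies feed the common server $s$, so $\CC_1 \cap \CC_2 = \emptyset$. Since the two copies share only $s$, and every arc into $s$ belongs to exactly one copy, no arc capacity is shared between the models. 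Hence a routing for the whole instance is feasible if and only if its restriction to each copy is feasible.

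Next we fix the CS target: select $n^\text{MVC}$ clients of each model (equivalently, $2n^\text{MVC}$ in total), all of whose models must be routed to $s$ within the single snapshot. One could also give one copy a trivially satisfiable target, but keeping the copies symmetric is simpler. The argument then proceeds in two steps.

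\textbf{Step 1.} By the independence above, feasibility of the combined selection splits into feasibility of the selection in each copy.

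\textbf{Step 2.} By the equivalence proved for 1-SF-CS, each copy is feasible if and only if the MVC instance has a cover of size $n^\text{MVC}$. Thus the combined instance answers the MVC feasibility question, so its decision version is NP-complete, and optimizing any CS objective (such as the number of selected clients or total utility) is NP-hard.

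The main obstacle is making sure that aggregating the CS target across the two models does not let one copy compensate for the other. If the question were phrased only as ``select $2n^\text{MVC}$ clients in total,'' one copy could carry more than $n^\text{MVC}$ selected clients and the other fewer, and the equivalence would break. The first remedy is per-model targets. If a single global target is wanted, we instead note that each copy can support at most $n^\text{MVC}$ routed clients: the total inflow capacity into $s$ from a copy is $n^\text{MVC}E^\text{MVC}$ and each client needs $E^\text{MVC}$. So a total of $2n^\text{MVC}$ forces exactly $n^\text{MVC}$ in each copy, and the reduction goes through.
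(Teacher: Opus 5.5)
Your proposal is correct and takes essentially the same approach as the paper, which also obtains the corollary by duplicating the MVC-based 1-SF-CS gadget, one copy per model. Your extra argument that each copy can route at most $n^{\text{MVC}}$ clients, because its inflow capacity into $s$ is $n^{\text{MVC}}E^{\text{MVC}}$, correctly handles the aggregate-target case, a detail the paper leaves implicit.
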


\begin{theorem}
2-UF-CS is NP-hard.
\end{theorem}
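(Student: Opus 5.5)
The plan is to derive the hardness of 2-UF-CS from that of 2-UF-NCS, with CS as the optimization layer. The idea is to show that deciding whether a utility target can be met is already NP-complete, by choosing utilities that force all clients to be selected.

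Start from the instance built in the proof of 2-UF-NCS: a single snapshot, reversed arcs, big clients $v_\ell$ with model size $n$, and clause clients with model size $1$. Turn it into a 2-UF-CS instance with the same graph, the same client sets $\CC_1=\{v_\ell\}$ and $\CC_2=$ the clause nodes, and $\kappa_c=1$ for every client. Give every client utility $u_c=1$. The decision question becomes: is there a CS with routing feasible by snapshot $K=1$ whose total utility is at least $m+n$? Total utility $m+n$ is attained exactly when every client is selected. In that case the CS question coincides with the feasibility question of 2-UF-NCS, which by the preceding theorem is equivalent to satisfiability of the 3SAT instance.

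Conclude as follows. Maximizing total utility lets us answer this decision question: compare the optimum with $m+n$. Hence 2-UF-CS is NP-hard. The reduction is clearly polynomial, since the graph is unchanged.

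The main obstacle is to make sure the CS formulation really contains NCS as a special case. This needs two checks. First, utilities must be allowed to be uniform, and the CS target must be phrased as a utility threshold, consistent with the discussion preceding 1-SF-CS. Second, selecting a proper subset must never reach the threshold, which holds because every utility is strictly positive.

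A more self-contained alternative avoids the NCS detour. It would mimic the 1-SF-CS reduction, or use the 3SAT gadget directly with big-client utilities large (e.g.\ $n+1$) and clause utilities $1$. The question is then whether utility $m(n+1)+n$ is reachable. Reaching that threshold forces all big clients to be selected, which recovers the literal-choice structure. In the writeup I would present the simpler argument and remark that the reduction carries over verbatim to two separate servers, using the server-splitting construction described after Theorem~\ref{theo:2ufmm}.
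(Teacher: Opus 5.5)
Your argument is correct, but it takes a different route from the paper.

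\textbf{Your route.} You use 2-UF-NCS as a black box. With unit utilities and threshold $m+n$, which equals the total utility of all clients, reaching the threshold forces every client to be selected. Because all utilities are strictly positive, no proper subset can reach it. The CS decision question therefore collapses to the NCS feasibility question, which the preceding theorem shows is NP-complete.

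\textbf{The paper's route.} The paper instead reduces directly from MAX-3SAT on the same reversed-arc gadget. Each big-model client $v_\ell$ gets utility $n+1$ and each clause client gets utility $1$. Since $n+1$ exceeds the combined utility of all clause clients, and all $v_\ell$ can always be served simultaneously, every optimum selects all big clients. This fixes one literal path per variable. The question ``is total utility at least $m(n+1)+n'$?'' then answers ``can at least $n'$ clauses be satisfied?''. Your ``self-contained alternative'' is exactly the $n'=n$ instance of this construction.

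\textbf{What each buys.} Your route is shorter and modular. It works with uniform utilities, so hardness does not depend on heterogeneous utilities. It also rests on a generic observation: the no-CS variant is the special case of CS with positive utilities and the threshold set to the total utility. Hence hardness of any NCS variant transfers to its CS counterpart. The paper's route does not depend on the 2-UF-NCS theorem. More importantly, it shows that the optimal utility value itself is hard to determine over the whole range of thresholds. This includes instances where serving all clients is impossible, i.e., the regime where CS is genuinely needed. Your reduction establishes hardness only at the degenerate threshold where the question is whether any client must be dropped at all.
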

\begin{proof}
MAX-3SAT \cite{GareyJohnson1979} is a NP-complete problem, with the same setting of $m$ Boolean variables and $n$ clauses as 3SAT, but it aims to find an assignment of variables that satisfies as many clauses as possible. We prove 2-UF-CS is NP-complete via a reduction from the MAX-3SAT problem to a special case of 2-UF-CS with only one snapshot $K$. We re-ultilize the construction in Figure \ref{fig:2ufncs}, where the triangle nodes represent the clients of the small model, and the (white) rectangle nodes represent the clients of the big model. Each client of the small model and the big model has utility one and $n+1$ for CS, respectively. This topology is constructed for each Boolean variable, hence there are in total $n$ clients for the small model and  $m$ clients of the big model.\par
Suppose the sizes of the two models are $n$ and $1$, respectively. For each variable node (a yellow rectangle node), at most $n$ units of flow can pass through it to the server $s$. Due to that the utility (i.e., $n+1$) of a client of the big model is higher than the sum utility (i.e., $n$) of all clients of the small model. One can verify that at the optimum of the 2-UF-CS problem, all clients of the big model must be selected. Furthermore, due to unsplittable flow, one of the two paths from $v_\ell$ to $s$ via the two literal nodes $x_\ell$ and ${\bar x}_\ell$ must be fully occupied by the big model. Choosing the path of $x_\ell$ corresponds to setting value true to Boolean variable $\bar x_\ell$ in 3SAT. In this case, flow of the small model (if any) can only go through ${\bar x}_\ell$. Hence, determining whether at least $n'$ clauses can be satisfied in the MAX-3SAT instance is equivalent to determining whether the total utility achieved is at least $m(n+1)+n'$ in the 2-UF-CS instance.
\end{proof}

Suppose the two models have separate servers $s_1$ and
$s_2$. By introducing a new node $s$ with arcs $(s_1, s)$ and $(s_2,
s)$ with large capacity, it is easy to see that 2-SF-NCS remains
polynomial-time solvable. As for the other three problem variants
that are NP-complete, by using the same construction, 
one can immediately conclude they remain NP-complete
for the case of two separate servers.

\begin{corollary}
The complexity results of 2-UF-NCS, 2-UF-CS, 2-SF-NCS, and 2-SF-CS
remain for two separate servers.
\end{corollary}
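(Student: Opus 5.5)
The corollary concerns separate servers $s_1$ and $s_2$, and I would treat the four variants in two groups: the tractable one, 2-SF-NCS, and the three hard ones, 2-UF-NCS, 2-UF-CS and 2-SF-CS.

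\textbf{2-SF-NCS.} The plan is to reduce the separate-server instance to a common-server one and then invoke the LP argument behind the corollary for 2-SF-NCS.
\begin{itemize}
\item Add a super-sink node $s$ and arcs $(s_1 \S K, s)$ and $(s_2 \S K, s)$.
\item A single-commodity feasibility LP with one sink at $s$ is not enough. It could send model-1 flow to $s_2$, because splittable single-commodity flow does not distinguish commodities.
\item The fix is to set the capacity of $(s_1\S K, s)$ to exactly $q_1|\CC_1|$ and that of $(s_2 \S K, s)$ to exactly $q_2|\CC_2|$. This is still insufficient if commodities mix.
\item I would therefore formulate a two-commodity LP directly. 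It has one flow variable per commodity per arc, joint capacity constraints, supplies $q_1$ (respectively $q_2$) at the client source nodes, and demands $q_1|\CC_1|$ at $s_1\S K$ and $q_2|\CC_2|$ at $s_2\S K$.
\item Multicommodity flow feasibility with fractional flow is an LP of polynomial size, so it is solvable in polynomial time.
\end{itemize}
This multicommodity LP is the justification I would actually rely on, rather than the super-sink trick.

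\textbf{The three hard variants.} Hardness for a restricted class implies hardness in general, so I will embed the earlier constructions.
\begin{itemize}
\item \emph{2-UF-NCS and 2-UF-CS.} Take the graph of Figure~\ref{fig:2ufncs} with its single server node $s$, now treated as a relay. Add server nodes $s_1$ (big model) and $s_2$ (small model) with arcs $(s,s_1)$ of capacity $mn$ and $(s,s_2)$ of capacity $n$, mirroring the downloading-phase argument. Since the total big flow is $mn$ and the total small flow is at most $n$, these arcs never bind beyond what each commodity needs. A model-2 packet could be routed to $s_1$, but routing a model to the wrong server does not count as delivery. Any feasible routing or CS solution therefore restricts to one in the original graph, and conversely. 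This gives the equivalence with 3SAT in the proof of 2-UF-NCS and with MAX-3SAT in the proof of 2-UF-CS.
\item \emph{2-SF-CS.} Use the duplicated-MVC construction from Corollary~\ref{theo:2sfcs}, but give each copy its own server: $s_1$ for copy one and $s_2$ for copy two. The two copies are disconnected, so the instance decomposes into two independent 1-SF-CS instances. The selection target ($2n^\text{MVC}$, or $n^\text{MVC}$ in one copy with the other trivial) is met if and only if the MVC instance is a yes-instance.
\end{itemize}
NP-completeness of 2-UF-NCS also needs membership in NP. A certificate listing a path per client is checkable in polynomial time.

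\textbf{Main obstacle.} The subtle step is the 2-SF-NCS tractability claim. Naively merging the servers via a super-sink loses commodity identity under splittable flow. I would resolve it by using the explicit two-commodity LP, whose size is polynomial in $|\CA|$.
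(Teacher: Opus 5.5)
Your proof is correct. For the three hard variants it follows the paper, which only says ``by using the same construction''. You make the old server a relay feeding new servers $s_1$ and $s_2$, as in the downloading-phase argument. You use disconnected copies for 2-SF-CS and add the NP-membership certificate, filling in details the paper leaves implicit. For the target $2n^{\text{MVC}}$ in 2-SF-CS, state explicitly that each copy can serve at most $n^{\text{MVC}}$ clients. This holds because its server has in-capacity $n^{\text{MVC}}E^{\text{MVC}}$ and each model has size $E^{\text{MVC}}$.

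The genuine difference is 2-SF-NCS. The paper merges the servers by adding a node $s$ with large-capacity arcs $(s_1,s)$ and $(s_2,s)$, and then appeals to the common-server single-commodity LP. Your objection to this is well founded. That LP is only a relaxation, because path decomposition no longer ties model-1 sources to $s_1$. For example, take one snapshot with unit sizes and capacities, a model-1 client whose only arc leads to $s_2$, and a model-2 client whose only arc leads to $s_1$. This instance is infeasible, yet the merged LP declares it feasible. Your two-commodity LP uses joint capacities and separate sinks $s_1\S K$ and $s_2\S K$. It is exact, since per-commodity path decomposition yields a valid split routing for every client and conversely. It also remains polynomial, so it is the argument that actually establishes tractability. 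Compared with the paper's intended route, you give up the pure network-flow structure and its combinatorial algorithms. That loss does not matter here, because splittable flow needs no integrality.
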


Finally, for more than two FL models, the tractability of 2-SF-NCS
generalizes because it remains an LP. For NP-complete variants of
two FL models, obviously they remain NP-hard when scaling up further
the number of models.


\section{Conclusions}
\label{sec:conclusion}

We have analyzed the tractability landscape of optimal
routing for global model distribution and local model collection over
time-varying graphs for federated learning in satellite networks.  With unicast routing for
model distribution, the min-max objective leads to tractability,
except for routing of two models with unsplittable flow.  For the other objective of weighted sum, the situation is
largely reversed - the problem variants are NP-hard, except for the case of one model with unsplittable flow.  If we deploy multicast
routing for model distribution, optimal routing is tractable for one
FL model, but NP-hard for two models. For model collection, it is tractable if no client selection needs to be performed, except for the
case of two models with unsplittable flow. With client selection, all problem variants
are NP-hard, except for the case of one model with unsplittable flow. Overall, our work provides a systematic characterization of routing complexity that guides the principled routing design, evaluation, and deployment in satellite-based FL or similar distributed learning systems.\par

Fully decentralized asynchronous FL, where there is no central server and nodes exchange and aggregate updates directly without global synchronization, has a communication pattern that differs fundamentally from the centralized one studied in this paper. For fully decentralized asynchronous FL, although lightweight gossip-based schemes are more suitable than pre-calculated routing, resource competition may call for network slicing and multicast for FL of multiple models. Analyzing routing schemes and their complexity for decentralized asynchronous FL is a promising direction for future work.

\bibliographystyle{IEEEtran}
\bibliography{ref}

\end{document}